\gdef\Notice@String{} 
\newcommand{\eg}{\emph{e}.\emph{g}.}
\theoremstyle{plain}
\newtheorem{theorem}{Theorem}[section]
\theoremstyle{definition}
\newtheorem{definition}[theorem]{Definition}
\theoremstyle{remark}
\newtheorem{remark}[theorem]{Remark}
\icmltitlerunning{\texttt{Gen-DFL}: Decision-Focused Generative Learning for Robust Decision Making}
\begin{document}

\twocolumn[
\icmltitle{\texttt{Gen-DFL}: Decision-Focused Generative Learning for Robust Decision Making}



\icmlsetsymbol{equal}{*}

\begin{icmlauthorlist}
\icmlauthor{Prince Zizhuang Wang}{yyy}
\icmlauthor{Shuyi Chen}{yyy}
\icmlauthor{Jinhao Liang}{uva}
\icmlauthor{Ferdinando Fioretto}{uva}
\icmlauthor{Shixiang Zhu}{yyy}
\end{icmlauthorlist}

\icmlaffiliation{yyy}{Heinz College, Carnegie Mellon University, USA}
\icmlaffiliation{uva}{Computer Science, University of Virginia, USA}

\icmlcorrespondingauthor{Prince Wang}{princewang@cmu.edu}

\icmlkeywords{Machine Learning, Decision Focused Learning, Generative Models}

\vskip 0.3in
]



\printAffiliationsAndNotice{}  
\begin{abstract}
 Decision-focused learning (DFL) integrates predictive models with downstream optimization, directly training machine learning models to minimize decision errors. While DFL has been shown to provide substantial advantages when compared to a counterpart that treats the predictive and prescriptive models separately, it has also been shown to struggle in high-dimensional and risk-sensitive settings, limiting its applicability in real-world settings. 
To address this limitation, this paper introduces decision-focused generative learning (\texttt{Gen-DFL}), a novel framework that leverages generative models to adaptively model uncertainty and improve decision quality. Instead of relying on fixed uncertainty sets, \texttt{Gen-DFL} learns a structured representation of the optimization parameters and samples from the tail regions of the learned distribution to enhance robustness against worst-case scenarios. This approach mitigates over-conservatism while capturing complex dependencies in the parameter space.
The paper shows, theoretically, that \texttt{Gen-DFL} achieves improved worst-case performance bounds compared to traditional DFL. Empirically, we evaluate \texttt{Gen-DFL} on various scheduling and logistics problems, demonstrating its strong performance against existing DFL methods.
\end{abstract}

\section{Introduction}
Decision-making under uncertainty arises in many real-world applications, including supply chain management, energy grid optimization, portfolio management, and transportation planning~\cite{SAHINIDIS2004971, liu2009theory, hhl003, delage2010distributionally, hu2016toward, kim2005optimal,11297004}. 
In these settings, decision-makers act with incomplete information and use machine learning predictions to estimate uncertain parameters, such as future demand, outage risk in power grids, asset returns in portfolios, and travel times or flows in transportation systems, for various downstream optimization tasks.


Standard methods, commonly referred to as predict-then-optimize (PTO)~\cite{elmachtoub2017smart}, tackle this problem by first training a predictive model to estimate the parameters of an optimization problem (\eg, expected demand or cost coefficients) and then using these estimates as inputs to an optimization model. While the separation between prediction and optimization enhances efficiency, it also introduces a fundamental drawback. Predictive models are typically trained to minimize standard loss functions (\eg, mean squared error), which may not align with the true objective of minimizing decision costs. As a result, small prediction errors can propagate through the optimization process, leading to costly, suboptimal decisions. For instance, in power outage management \cite{zhu2021quantifying}, overestimating energy demand may lead to unnecessary resource allocation, whereas underestimation could result in supply shortages and prolonged downtime.

\begin{figure}
    \centering
    \includegraphics[width=\linewidth]{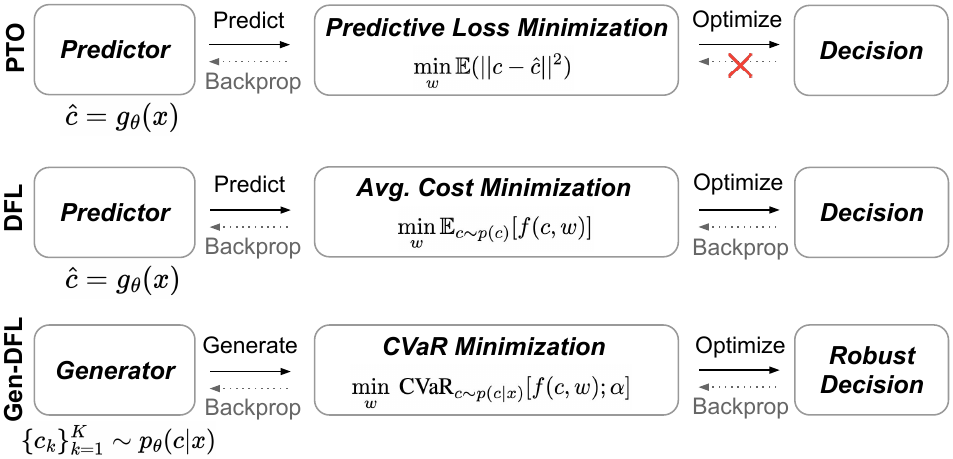}
    \caption{Comparison of the proposed decision-focused generative learning \texttt{Gen-DFL} ramework with conventional predict-then-optimize (PTO) and decision-focused learning (DFL).}
    \label{fig:pipeline}
\vspace{-0.1in}
\end{figure}

To address this issue, decision-focused learning (DFL) integrates prediction and optimization into a single end-to-end framework~\cite{donti2017task, Mandi_2024}. Instead of optimizing purely for predictive accuracy, DFL trains machine learning models with the explicit goal of minimizing the final decision cost. This key idea is enabled by differentiating the optimization process within the learning loop, and results in an alignment of the model’s predictions with their downstream impact. This approach has shown clear improvements in structured decision-making tasks where the optimization landscape is well-behaved and relatively low-dimensional in practice.

Despite these advantages, DFL suffers from critical limitations in several respects:
($i$) \emph{Scalability}: In high-dimensional settings, the curse of dimensionality~\cite{koppen2000curse} degrades the predictive model’s ability to capture complex dependencies in the parameter space. Since DFL typically relies on single-point predictions, it struggles to encode the full distributional uncertainty of the decision variables \cite{Fioretto:jair24}. This leads to overconfident estimates that degrade decision quality when uncertainty is high.
($ii$) \emph{Risk Sensitivity}: In many applications, decision-makers prioritize robustness over worst-case outcomes rather than optimizing for expected performance. Traditional DFL models, however, are primarily trained to improve average-case decisions and do not explicitly model tail risks \cite{ben2009robust, beyer2007robust}. 

To overcome these challenges, this paper proposes decision-focused generative learning (\texttt{Gen-DFL}), a novel end-to-end framework that leverages generative models to enhance decision quality in high-dimensional and risk-sensitive settings. Unlike traditional approaches that rely on fixed uncertainty sets, \texttt{Gen-DFL} learns a distributional representation of uncertain parameters using deep generative models. 
Recent advances in generative modeling enable efficient learning of complex, high-dimensional distributions~\cite{dong2023conditional, wu2024counterfactual}, allowing for adaptive sampling from tail regions to support risk-aware decision-making without excessive conservatism.
By dynamically balancing robustness and efficiency, \texttt{Gen-DFL} provides a more flexible and principled approach to decision optimization. A schematic comparison of the predict-then-optimize (PTO) model, standard DFL, and \texttt{Gen-DFL} is shown in Figure~\ref{fig:pipeline}.

\textbf{Contributions.} The paper makes three contributions:
\begin{itemize}[left=0pt,topsep=0pt,parsep=0pt,itemsep=0pt]
\item It introduces \texttt{Gen-DFL}, a DFL framework that leverages generative models to capture uncertainty in high-dimensional stochastic optimization and to enable task-specific robustness control.
\item It provides a theoretical analysis that characterizes conditions under which \texttt{Gen-DFL} outperforms traditional DFL, with emphasis on high-dimensional and risk-sensitive decision problems.
\item Through experiments on both synthetic and real-world decision-making tasks, it shows that \texttt{Gen-DFL} improves decision quality compared to existing DFL baselines.
\end{itemize}

\section{Related Works}

Decision-focused learning (DFL) enhances decision-making under uncertainty by integrating prediction and optimization into a single framework. \citet{bengio1997using} showed that optimizing predictive models for decision outcomes improves financial performance. 
Differentiable optimization layers have further expanded DFL applications \cite{agrawal2019differentiable}. For example, \citet{pmlr-v70-amos17a} introduced differentiable quadratic programs, enabling backpropagation through constrained optimization, while \citet{agrawal2019differentiable} extended this to all convex programs. Parallel work has explored integrating integer programming into neural networks~\cite{NEURIPS2020_51311013,wilder2019melding}. There is also another line of research which focuses on improving efficiency and effectiveness of prediction-based DFL \cite{shah2022decisionfocusedlearningdifferentiableoptimization, shah2024leavingnestgoinglocal, kong2022endtoendstochasticoptimizationenergybased}.

However, existing DFL methods rely on single-point predictions, failing to capture uncertainty and leading to suboptimal decisions~\cite{koppen2000curse,ben2009robust}. Additionally, they typically optimize for average-case performance, making them unsuitable for risk-sensitive applications~\cite{Mandi_2024} in safety-critical or regulated domains. Approaches like Conformal-Predict-Then-Optimize (CPO)~\cite{patel2024conformal} attempt to address this by constructing fixed uncertainty sets but can be overly conservative, especially in high-dimensional settings.

Robust Optimization (RO) provides a principled approach to decision-making under uncertainty by ensuring solutions remain feasible under the worst-case scenario~\cite{ben2002robust,bertsimas2004robust,ben2006extending}, with clear guarantees and scalability. Instead of relying on probabilistic assumptions about uncertain parameters, RO constructs uncertainty sets that define the range of possible parameter values \cite{bertsimas2011theory} and aims to find the decision that is robust against the worst-case in the uncertainty sets. This approach has found applications in domains such as supply chains \cite{bertsimas2004robust}, currency portfolio management \cite{fonseca2011robust}, and power system optimization \cite{10384836}.

Despite its guarantees, the solutions suggested by RO suffer from two major limitations:
($i$) Uncertainty set construction usually relies on heuristic choices, making it difficult to capture the real dynamics in complex real-world applications~\cite{10384836}.
($ii$) Such pre-specified uncertainty sets tend to be overly conservative~\cite{Roos2020ReducingConservatism} as it often focuses solely on the worst-case outcome~\cite{wang2025learningdecisionfocuseduncertaintysets, chenreddy2024endtoendconditionalrobustoptimization, yeh2024endtoendconformalcalibrationoptimization}, whereas many high-stakes applications require accounting for multiple adverse scenarios.

\section{Preliminaries}
\paragraph{Decision-Focused Learning.}
Consider a general stochastic optimization problem that chooses a decision vector $w$ before observing the uncertain parameters $c$ and aims to minimize the expected objective:
\begin{equation}
w^\star \coloneqq \arg \min_{w} \mathbb{E}_{c\sim p(c)}[f(c, w)],
\label{eq:general_risk_min}
\end{equation}
where $c$ is a random vector characterizing the problem parameters and $f(c,w)$ is the objective function.
The goal is to find the optimal decision $w^\star$ that minimizes the expected decision cost under the distribution $p(c)$, given the true data-generating process.

A common approach, predict-then-optimize (PTO), assumes a linear objective, which simplifies the problem to the following deterministic surrogate:
\begin{equation}
    w^*(\hat{c}) \coloneqq \arg \min_{w} \hat{c}^T w,
    \label{eq:pto}
\end{equation}
where $\hat{c}$ is the estimate of $\mathbb{E}[c|x]$ conditioning on covariate $x$. 
This framework consists of two components: ($i$) A predictor $\hat{c} \coloneqq g_\theta(x)$, trained to minimize the standard mean squared error (MSE) $\mathbb{E}||\hat{c} - c||^2$; ($ii$) An optimization model that finds the best decision $w$ given $\hat{c}$. As noted by \cite{elmachtoub2017smart}, this approach often leads to suboptimal decisions, as minimizing prediction error does not necessarily translate to improved decision quality.

To mitigate this issue, decision-focused learning (DFL) \cite{Mandi_2024} integrates prediction with decision-making by training $g_\theta(x)$ using decision regret as the loss function. The resulting objective takes the following form:
\begin{align*}
\ell_{\text{DFL}}(\theta)
&= \mathbb{E}_x \!\left[\operatorname{Regret}(g_\theta(x), c)\right], \\
\operatorname{Regret}(g_\theta(x), c)
&= f\!\bigl(c, w^\star(g_\theta(x))\bigr) - f\!\bigl(c, w^\star(c)\bigr).
\end{align*}
For notational simplicity, we use $c$ to denote the true mean of the optimization parameters given $x$. 
By optimizing $g_\theta(x)$ directly with respect to decision performance, DFL ensures that the predicted parameters yield decisions that are robust to downstream cost objectives. We will refer to this conventional DFL approach, which relies on explicit prediction models, as Pred-DFL. 

\paragraph{Robust Optimization.}

In some real-world applications, the expectation-based optimization in \eqref{eq:pto} may fail to provide reliable decisions under adverse conditions, potentially leading to severe consequences \cite{ben2009robust, beyer2007robust}. To mitigate this risk, robust optimization (RO) \cite{Kouvelis1997RobustDiscreteOptimization,ben2009robust,shalev2016minimizing} seeks decisions that perform well in the worst-case scenario within an uncertainty set $\mathcal{U}(x)$, by solving the min-max formulation below:
\begin{equation}
    w^\star(x) \coloneqq \arg \min_{w} \ \max_{c \in \mathcal{U}(x)} f(c, w).
    \label{eq:ro}
\end{equation}

This formulation ensures robustness against the most adverse realization of $c$, providing worst-case protection. However, it can be overly conservative, potentially leading to suboptimal decisions in typical scenarios. In many risk-sensitive applications, a more nuanced approach is required -- one that balances robustness and flexibility by considering a broader range of adverse outcomes beyond just the extreme worst case \cite{sarykalin2008value}. This has led to the development of alternative robust and risk-aware optimization frameworks, such as distributionally robust optimization (DRO) {\cite{NEURIPS2018_a08e32d2, zhu2022distributionally, chen2025uncertainty} and conditional value-at-risk (CVaR) optimization \cite{duffie1997overview, rockafellar2000optimization, rockafellar2002conditional}, which offer a more refined trade-off between robustness and performance.

\begin{figure}
    \centering
    \includegraphics[width=\linewidth]{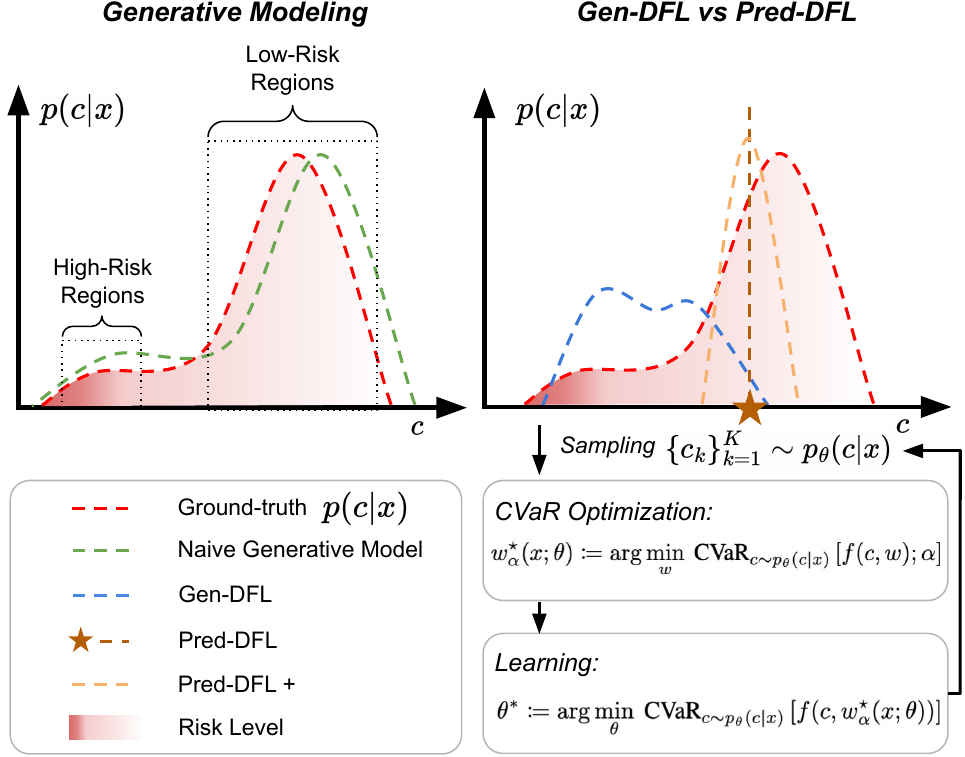}
    \caption{
    Unlike Pred-DFL, \texttt{Gen-DFL} leverages a generative model to capture $p(c|x)$ while incorporating the decision-making objective which emphasizes the high-risk region. 
    }
    \label{fig:gendfl}
\vspace{-.1in}
\end{figure}

\section{Proposed Framework: \texttt{Gen-DFL}}

This section presents the proposed decision-focused generative learning (\texttt{Gen-DFL}) framework.
Specifically, we develop a novel decision-making paradigm, generate-then-optimize (GTO), designed for risk-sensitive decision problems. Our approach frames the problem as a conditional value-at-risk (CVaR) optimization, leveraging a generative model to produce plausible samples that capture the dynamics of high-risk regions in high-dimensional settings.  
To effectively learn the generative model, we propose a new loss function that integrates both decision-focused learning and generative modeling objectives, ensuring that the generated samples not only reflect the underlying data distribution but also lead to robust, high-quality decisions. Figure~\ref{fig:gendfl} provides an overview of the proposed framework.



\paragraph{Problem Setup.} We seek robust decisions that effectively manage risk by minimizing the percentiles of loss distributions. This approach has been widely adopted in risk-sensitive domains such as financial portfolio optimization, where regulatory frameworks often define risk management requirements in terms of loss percentiles \cite{sarykalin2008value}.
A widely used measure for quantifying high-loss scenarios is conditional value-at-risk (CVaR) \cite{duffie1997overview, rockafellar2000optimization, rockafellar2002conditional}, which provides a characterization of tail risk by capturing the expected loss beyond a given percentile threshold. 
Formally, given a confidence level $\alpha$, CVaR is defined as:
\begin{equation}
\text{\text{CVaR}}[f(c, w);\alpha] = \mathbb{E}\left[f(c, w) \mid f(c, w) \geq \text{VaR}_\alpha \right],
\end{equation}
where $\text{VaR}_\alpha$ represents the value-at-risk threshold; the probability of exceeding the threshold is at most $\alpha$.
Our objective is to find the decision $w^\star$ that minimizes the costs in the worst-$\alpha\%$ of outcomes. This leads to the following risk-sensitive optimization formulation \cite{krokhmal2002portfolio}:
\begin{equation}
\label{eg:cvar_ro}
    w^\star(x; \alpha) \coloneqq \arg \min_{w} \ \text{CVaR}_{c \sim p(c|x)} [f(c, w); \alpha].
\end{equation}
We note that the $c$ is defined over the high-risk region of the distribution $p(c|x)$, allowing for a more flexible and probabilistic characterization of uncertainty compared to the ``hard'' uncertainty set used in \eqref{eq:ro}.
This formulation bridges robust and expectation-based optimization:
($i$) As $\alpha \to 0$, the problem reduces to robust optimization, focusing exclusively on the worst-case scenario in \eqref{eq:ro}. ($ii$) As $\alpha \to 1$, it converges to standard expectation-based optimization in \eqref{eq:general_risk_min}, minimizing the expected cost across all possible outcomes. Thus, our approach generalizes robust optimization by ensuring resilience against adverse outcomes beyond a single worst-case scenario, balancing conservatism and probabilistic risk awareness in decision-making.

\subsection{Generate-Then-Optimize}

To solve \eqref{eg:cvar_ro}, we introduce a novel generate-then-optimize (GTO) paradigm, which leverages generative modeling to approximate the risk-sensitive optimization problem.
Conventional decision-focused learning (Pred-DFL) relies on a point estimate $\hat{c}$ of the optimization parameters. While effective in some cases, this approach fails to capture the full distribution $p(c|x)$, particularly in high-dimensional settings, making it inadequate for risk-sensitive applications where adverse outcomes must be explicitly considered. 
Moreover, point estimates are only appropriate when the objective function is linear, as the optimization problem in such cases depends solely on the expected value of $c$, making variance and higher-order moments irrelevant. 

To overcome these limitations, we replace deterministic predictions with a generative model, capturing the full risk distribution. This allows us to account for uncertainty in a data-driven manner, ensuring that risk-sensitive scenarios are explicitly considered. The optimization problem is then solved using sample-average approximation (SAA)
~\cite{pagnoncelli2009sample, kim2015guide, emelogu2016enhanced}. 
Formally, we aim to optimize:
\begin{equation}
\label{eq:gto}
w_\theta^\star(x;\alpha) \coloneqq \arg \min_{w} \ \text{CVaR}_{c \sim p_\theta(c|x)} \left [f(c, w); \alpha \right ].   
\end{equation}
Unlike traditional RO, which requires a pre-defined uncertainty set $\mathcal{U}(x)$ -- often leading to overly conservative or restrictive formulations -- our approach treats uncertainty as a learnable distribution. Specifically, we model $p_\theta(c|x)$ using a generative model parameterized by $\theta$, 
allowing it to adaptively capture risk-sensitive regions based on empirical data. 
This approach provides a more nuanced and adaptive approach to uncertainty modeling, ensuring that decisions are informed by the full distribution of possible outcomes rather than rigid, pre-specified constraints.

We emphasize that the proposed \texttt{Gen-DFL} framework is model-agnostic and does not rely on a specific generative modeling choice. 
In this work, we adopt conditional normalizing flows (CNFs)~\cite{winkler2019learning} to model the conditional distribution $p(c|x)$ due to their flexibility and tractable likelihood evaluation. 
CNFs transform a simple base distribution $p_Z(z)$ (\eg, Gaussian) into a complex target distribution via an invertible mapping $g_\theta : \mathcal{C} \rightarrow \mathcal{Z}$, where $\mathcal{C}, \mathcal{Z}$ are the supports of the resulting distribution and the base distribution. This enables the representation of arbitrarily complex, high-dimensional distributions. This transformation follows the change-of-variables formula \cite{tabak2013family, papamakarios2021normalizing}:
$
p_\theta(c|x) = p_Z(g_\theta(c; x)) \left|\frac{\det \partial g_\theta(c; x)}{ \partial c}\right|.
$
This expressiveness enables our model to generate samples that capture both typical and high-risk scenarios, improving robustness in decision-making under CVaR.

\subsection{Decision-Focused Generative Learning}

We now present the \texttt{Gen-DFL} framework, which provides a decision-focused solution to the GTO problems in a unified end-to-end learning pipeline.
For simplicity, we denote the optimal decision obtained from our model $w_\theta^\star(x; \alpha)$ in \eqref{eq:gto} as $w_\theta^\star$, omitting $x$ and $\alpha$.
Similar to other DFL frameworks, \texttt{Gen-DFL} consists of two alternating steps:
\begin{enumerate}[left=0pt,topsep=0pt,parsep=0pt,itemsep=0pt]
    \item \emph{Generate-Then-Optimize}: Generate samples $\{c_k\}_{k=1}^K$ using conditional generative model (CGM) $p_\theta(c|x)$ and solve \eqref{eq:gto} for the optimal decision via SAA. 
    \item \emph{Model Learning}: Given the resulting decision $w_\theta^\star$, update the generative model parameters by jointly minimizing the generative loss and the decision cost under $w_\theta^\star$.
\end{enumerate}
A detailed description of the learning procedure is provided in Algorithm~\ref{alg:gen-dfl} in Appendix~\ref{appendix:algo}. Below, we elaborate on key components of our framework.

\paragraph{Regret in CVaR.}

Unlike Pred-DFL, where the decision cost is computed as the regret for a single pair $(\hat{c}, c)$,
in our stochastic optimization problem, the parameter $c$ follows a distribution, requiring regret to be evaluated over all possible realizations of $c$.
Moreover, in robust decision-making, we seek to minimize decision costs based on the worst-$\alpha\%$ outcomes, rather than the full distribution. To capture this, we define regret using CVaR:
\begin{align*}
    & \text{Regret}_{\theta, p}(x;\alpha) \coloneqq \text{CVaR}_{p(c|x)}\Big[f(c, w_\theta^\star) - f(c, w^\star);\alpha\Big],    
\end{align*}
where $w^\star \coloneqq \arg\min_{w} \text{CVaR}_{c\sim p(c|x)}[f(c, w);\alpha]$ is the optimal decision under the true distribution.
The parameter $\alpha$ controls the level of risk sensitivity: 
The lower values of $\alpha$ emphasize the worst-case outcomes, making decisions more conservative. 
When $\alpha=1$, it recovers the standard expected regret across all realizations:
$\mathbb{E}_{c\sim p(c|x)} [f\left( c, w_\theta^\star \right)  -   f\left( c, w^\star \right)]$.


\paragraph{Gen-DFL Loss.}
In practice, the true data distribution $p(c|x)$ is typically inaccessible, making direct regret evaluation infeasible. To address this challenge, we introduce an auxiliary model $q(c|x)$, trained on available data to approximate $p(c|x)$. 
Once learned, $q(c|x)$ remains fixed and serves as a proxy distribution to compute the estimated $\text{Regret}_{\theta, q}(x, \alpha)$ and the corresponding surrogate loss function $\ell(\theta;\alpha, q)$. This enables regret evaluation even when the true distribution is not directly observable.
The training objective for \texttt{Gen-DFL} can then be formulated as the aggregated regret across all inputs $x$ with an additional regularization term to ensure stability in generative modeling:
\begin{equation}
\label{eq:gendfl-loss}
    \ell_\texttt{Gen-DFL}(\theta;q, \alpha) \coloneqq 
    \beta \cdot \mathbb{E}_x[\text{Regret}_{\theta, q}(x;\alpha)] + \gamma \cdot \ell_\text{gen}(\theta),
\end{equation}
where $\ell_\text{gen}(\theta)$ is the generative model loss (\eg, negative log-likelihood, evidence lower bound (ELBO) for variational autoencoders~\cite{kingma2013auto}, or score-matching loss for diffusion models~\cite{ho2020denoising}). Here, $\beta$ and $\gamma$ are hyperparameters that balance the decision-focused regret loss and the generative model loss. 
The generative loss term $\ell_\text{gen}(\theta)$ acts as a regularization, preventing the learned generative model from deviating excessively from the true data distribution, ensuring reliable sample generation for decision-making during training.

\paragraph{Surrogate Loss Function.}

When training DFL models with respect to the decision loss, it is necessary to backpropagate errors through the decision variable. However, this requires computing the partial derivatives $\frac{\partial w^\star_\theta}{\partial c}$, which often involves dependency chains. Inspired by \cite{mulamba2020contrastive}, we propose a surrogate contrastive loss in our Gen-DFL setting to address the challenge of differentiating through the combinatorial optimization mapping:
\begin{align*}
\ell_{\texttt{Gen-DFL}}(\theta;\alpha)
= & \beta \cdot \mathbb{E}_x \Big[ \sum_{w^s \in S} \Big(
\text{CVaR}_{p_\theta(c\mid x)} \big[ f\big( c, w^s \big) \\
&- f\big( c, w^\star \big); \alpha \big] \Big) \Big]
+ \gamma \cdot \ell_{\text{gen}}(\theta).
\end{align*}
where $w^\star$ is the target solution, and the negative samples $w^s \in S \subset \mathcal{W} \setminus \{w^\star\}$ are a subset of solutions that differ from the target solution in practice.

\section{Theoretical Analysis}
\label{analysis}

This section provides an analysis of the validity of our sample-based regret estimation method and compares \texttt{Gen-DFL} and traditional Pred-DFL across different problem settings by examining their regret bounds. Our analysis reveals that as the complexity of the optimization problem increases, whether due to higher dimensionality, greater variance in the data, or a more nonlinear objective function, \texttt{Gen-DFL}'s advantage over Pred-DFL becomes more pronounced, leading to improved decision quality in the most challenging practical settings.

We first derive the bound for the loss difference $|\ell(\theta;p, \alpha) - \ell(\theta;q,\alpha)|$, comparing the loss function $\ell(\theta;p, \alpha)$ under the ground-truth distribution $p(c|x)$ with the surrogate loss $\ell(\theta;q, \alpha)$ computed using the proxy model $q(c|x)$. The proofs of the theorems can be found in Appendix~\ref{appendix:theorem}.

\begin{theorem}

Under the assumption that the objective function $f(c,w)$ is $L_f$-Lipschitz continuous with respect to $c$ for a fixed decision variable $w$, the gap between $ \ell(\theta;p, \alpha)$ and $\ell(\theta;q, \alpha)$
is bounded by
\[
|\ell(\theta;p, \alpha) - \ell(\theta;q,\alpha)| \leq K_q \cdot \mathbb{E}_x \left[ \mathcal{W}(p(c|x), q(c|x)) \right],
\]
where $ \mathcal{W}(p(c|x), q(c|x)) $ is the Wasserstein-1 distance between $ p(c|x) $ and $ q(c|x) $ and $K_q$ is some constant.
\end{theorem}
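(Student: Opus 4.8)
The plan is to peel off the distribution-independent part of the objective, reduce the bound to a pointwise statement in $x$, and then quantify how much the CVaR of a fixed Lipschitz integrand can move when the underlying law is perturbed, using the Rockafellar--Uryasev dual representation of CVaR together with Kantorovich--Rubinstein duality.

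\emph{Step 1: reduce to the regret term.} Since $\ell(\theta;\mu,\alpha)=\mathbb{E}_x[\text{Regret}_{\theta,\mu}(x;\alpha)]+\gamma\,\ell_{\text{gen}}(\theta)$ and the regularizer $\gamma\,\ell_{\text{gen}}(\theta)$ does not depend on the distribution used to evaluate the regret, it cancels in the difference. Applying Jensen's inequality to $|\cdot|$,
\[
|\ell(\theta;p,\alpha)-\ell(\theta;q,\alpha)| \le \mathbb{E}_x\big[\,\big|\text{Regret}_{\theta,p}(x;\alpha)-\text{Regret}_{\theta,q}(x;\alpha)\big|\,\big],
\]
so it suffices to bound the integrand for each fixed $x$.

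\emph{Step 2: CVaR is Lipschitz in $\mathcal{W}$.} Fix $x$. Both regrets are the CVaR, under $p(\cdot|x)$ and $q(\cdot|x)$ respectively, of the random variable $\delta(c):=f(c,w_\theta^\star)-f(c,w^\star)$, where $w_\theta^\star$ is fixed once $\theta$ is fixed (it is defined through the generative model $p_\theta$, not through $p$ or $q$). Since $f(\cdot,w)$ is $L_f$-Lipschitz for each fixed $w$, $\delta$ is $2L_f$-Lipschitz in $c$. The key lemma is that for any $L$-Lipschitz $\phi$,
\[
\big|\text{CVaR}_p[\phi(c);\alpha]-\text{CVaR}_q[\phi(c);\alpha]\big| \le \tfrac{L}{1-\alpha}\,\mathcal{W}(p,q).
\]
To see this, write $\text{CVaR}_\mu[\phi(c);\alpha]=\inf_{\tau}\{\tau+\tfrac{1}{1-\alpha}\mathbb{E}_{c\sim\mu}[(\phi(c)-\tau)^+]\}$ (Rockafellar--Uryasev). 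For each fixed $\tau$ the map $c\mapsto\tau+\tfrac{1}{1-\alpha}(\phi(c)-\tau)^+$ is $\tfrac{L}{1-\alpha}$-Lipschitz, being a scaled composition of the $1$-Lipschitz function $(\cdot)^+$ with $\phi$; hence Kantorovich--Rubinstein duality bounds the gap between its $p$- and $q$-expectations by $\tfrac{L}{1-\alpha}\mathcal{W}(p,q)$, uniformly in $\tau$. Taking infima over $\tau$ and using $|\inf_\tau a(\tau)-\inf_\tau b(\tau)|\le\sup_\tau|a(\tau)-b(\tau)|$ gives the claim. Applying it with $\phi=\delta$ and $L=2L_f$, then taking $\mathbb{E}_x$ in Step~1, yields the theorem with $K_q=\tfrac{2L_f}{1-\alpha}$ (any convention-dependent constant from the CVaR normalization being absorbed here).

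\emph{Main obstacle.} The one place that needs care is the baseline decision. In the clean version above $w^\star$ is held fixed across $p$ and $q$, so $\delta$ is the same function under both laws and Step~2 applies verbatim. If instead $\text{Regret}_{\theta,q}$ is evaluated with its own minimizer $w_q^\star$ (the optimal decision under $q$), one picks up an extra term of the form $|\text{CVaR}_q[f(c,w_q^\star)-f(c,w_p^\star);\alpha]|$; by subadditivity and monotonicity of CVaR this is controlled by $f(c,w_p^\star)-f(c,w_q^\star)$, which requires a stability bound on the CVaR-minimizer under Wasserstein perturbations -- e.g.\ via strong convexity of $w\mapsto\text{CVaR}_\mu[f(c,w);\alpha]$ combined with Lipschitzness of $f$ in $w$. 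This stability argument (and the resulting $q$-dependence of the constant, which explains the notation $K_q$), rather than the duality bookkeeping, is where the real work lies; all such factors can again be folded into $K_q$.
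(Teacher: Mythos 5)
Your proposal is correct and follows essentially the same route as the paper's proof: reduce to a per-$x$ comparison, then bound the shift in CVaR of a fixed Lipschitz integrand under a change of law via Kantorovich--Rubinstein duality. The one substantive difference is in the key step: the paper asserts $\bigl|\mathrm{CVaR}_p[f(c,w)]-\mathrm{CVaR}_q[f(c,w)]\bigr|\le\sup_{\|h\|_{\mathrm{Lip}}\le1}\bigl|\mathbb{E}_p[h]-\mathbb{E}_q[h]\bigr|$ ``by definition of CVaR'' and takes $K_q=2L_f$, whereas you actually prove the Wasserstein--Lipschitz property of CVaR through the Rockafellar--Uryasev dual representation and the inequality $|\inf_\tau a-\inf_\tau b|\le\sup_\tau|a-b|$, which correctly produces the extra $1/(1-\alpha)$ (equivalently $1/\alpha$ in the paper's convention) in the constant; your version of this step is the rigorous one, and the tail-normalization factor genuinely belongs in $K_q$. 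You also apply the lemma directly to the $2L_f$-Lipschitz difference $\delta(c)=f(c,w_\theta^\star)-f(c,w^\star)$, matching the paper's definition of regret as a CVaR of a difference, while the paper silently replaces this with a difference of CVaRs before invoking the reverse triangle inequality. Finally, the obstacle you flag -- whether $w^\star$ in $\mathrm{Regret}_{\theta,q}$ is the minimizer under $q$ rather than under $p$, which would require a stability argument for the CVaR-minimizer -- is real and is passed over in the paper, which treats $w^\star$ as fixed across both distributions.
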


The theorem above implies that the surrogate loss provides a valid approximation to the original loss function, provided the proxy model $ q(c|x) $ can estimate the ground-truth $ p(c|x) $ well. The bound is directly proportional to the $\mathcal{W}(p(c|x), q(c|x))$, which quantifies the discrepancy between these distributions. We now establish the conditions under which \texttt{Gen-DFL} outperforms Pred-DFL. To facilitate our analysis, we first introduce the following two definitions.

\begin{definition}
Let $p(c|x)$ denote the true conditional distribution of $c$, 
and let $p_\theta(c|x)$ be the generative model.
We define $Q_c$ to be the “worst $\alpha\%$ tail” representative for $c$ under $p(c|x)$ based on the target decision $w^\star$. Formally, 
\[
Q_c[\alpha] \coloneqq \mathbb{E}[c \mid f(c, w^\star) \ge \mathrm{VaR}_\alpha].
\]
\end{definition}

\begin{definition}
Given the target decision $w^\star$ and the decisions found by Pred-DFL ($w^\star_\text{pred}$) and \texttt{Gen-DFL} ($w^\star_\theta$), we can define the regret of Pred-DFL and \texttt{Gen-DFL} as:
\[
\begin{aligned}
R_{\mathrm{pred}}(x;\alpha)
&= f\!\bigl(Q_c[\alpha], w_{\mathrm{pred}}^\star\bigr)
   - f\!\bigl(Q_c[\alpha], w^\star\bigr),\\
R_{\mathrm{gen}}(x;\alpha)
&= \mathrm{CVaR}_{p(c \mid x)}\!\Bigl[
    f\!\bigl(c, w^\star_\theta\bigr) - f\!\bigl(c, w^\star\bigr)
    ; \alpha
  \Bigr].
\end{aligned}
\]
\end{definition}

Next, we develop a regret bound that quantifies the performance gap between \texttt{Gen-DFL} and Pred-DFL, incorporating data variance and optimization complexity, such as the dimensionality of the parameter space and the risk-sensitive level. The proof can be found in Appendix~\ref{appendix_gendfl_preddfl}.

\begin{theorem}
\label{theorem:cvar_extension}
Let $g:\mathcal{X} \rightarrow \mathcal{C}$ be the predictor in Pred-DFL. Assume the objective function $f(c,w)$ is Lipschitz continuous for any $c , w$. 
There exists some constants $L_w, L_c, \kappa_1, \kappa_2, \kappa_3$ such that the following upper-bound holds for the aggregated regret gap $\mathbb{E}_x|\Delta R(x)|$:
\begin{align*}
\mathbb{E}_x[\Delta R(x)]
&\le \mathbb{E}_x\Bigg[
\frac{2L_w}{\alpha}\Big(
  \kappa_1\,\mathcal{W}(p_\theta,p)
  +\kappa_2\,\|\mathrm{Bias}[g]\|
\Big) \\
&\qquad\quad
+\Big(\tfrac{2L_w}{\alpha}\kappa_3 + 2L_c\Big)\sqrt{\|\mathrm{Var}[c\mid x]\|} \\
&\qquad\quad
+\mathrm{CVaR}_{p(c\mid x)}\!\big[\|\mathrm{Bias}[g(x)]\|;\alpha\big]
\Bigg], \\
\text{where} \ \Delta R(x) &\coloneqq R_{\mathrm{pred}}(x;\alpha) - R_{\mathrm{gen}}(x;\alpha).
\end{align*}

\end{theorem}

The above results reveal how the following three factors affect the performance gap between \texttt{Gen-DFL} and Pred-DFL: 
($i$) Variance of the parameter space $\|\mathrm{Var}[c|x]\|$: Higher variance in $c$ conditioned on $x$ increases uncertainty and amplifies the difficulty of accurately approximating the objective. Pred-DFL, which relies on point estimates from $g(x)$, struggles in high-variance settings.
In contrast, \texttt{Gen-DFL} benefits from modeling the full distribution $p(c | x)$, capturing the variability and structure needed for robust decision-making under uncertainty;
($ii$) Dimensionality of the parameter space, including $d_c$ and $d_x$: As the dimensionality increases, the estimation error of the predictor in Pred-DFL grows at a rate of $\mathcal{O}(\sqrt{(d_x + d_c)/n} / \alpha)$ (see proof in Appendix~\ref{thm:highdim_conditional_CVaR}), making it increasingly difficult to obtain reliable point estimates; 
($iii$) Risk level $\alpha$: The inverse dependence of the estimation error on $\alpha$ implies that smaller values of $\alpha$ make quantile regression more challenging for Pred-DFL, as data in the tail regions of the worst $\alpha\%$ outcomes become increasingly sparse. This leads to a larger bias in $g(x)$ for smaller $\alpha$. In contrast, \texttt{Gen-DFL} leverages a generative model to capture the full conditional distribution $p(c|x)$.
Together, these insights demonstrate that \texttt{Gen-DFL} offers significant advantages over Pred-DFL in complex, high-dimensional, and risk-sensitive scenarios.

\FloatBarrier
\begin{table*}[!t]
\caption{Comparison of decision quality (average relative regret, \textcolor{red}{$\downarrow$}, lower is better)  across tasks in high-variance settings ($\sigma=20$). 
}
\label{table:1}
\centering
\resizebox{\textwidth}{!}{ 
\begin{tabular}{ll|ccccccc|c}
\toprule
\textbf{Task}  & \textbf{} & {Pairwise} & {Listwise} & {NCE} & {MAP} & {SPO+} & Diff-DRO & {2Stage (PTO)} & \textbf{Gen-DFL} \\
\midrule
\multirow{4}{*}{\textbf{Portfolio}} 
& Deg-2 & 11.48$\pm$(0.50) & 22.87$\pm$(1.11) & 8.57$\pm$(0.48) & 8.88$\pm$(0.34)  & 6.92$\pm$(0.26)  & 8.30$\pm$(0.36)   & 16.90$\pm$(0.55) & \textbf{3.71$\pm$(0.18)} \\
& Deg-4 & 11.16$\pm$(0.32) & 20.70$\pm$(1.19) & 7.81$\pm$(0.52) & 8.43$\pm$(0.65) & 7.23$\pm$(0.60) & 7.41$\pm$(0.67) & 14.89$\pm$(0.63) & \textbf{3.81$\pm$(0.22)} \\
& Deg-6 & 11.54$\pm$(0.78) & 18.57$\pm$(0.87) & 8.69$\pm$(0.61) & 8.51$\pm$(0.38) & 7.01$\pm$(0.26) & 8.56$\pm$(0.71) & 16.02$\pm$(0.78) & \textbf{4.31$\pm$(0.32)} \\
& Deg-8 & 10.44$\pm$(0.36) & 21.92$\pm$(0.95)  & 7.93$\pm$(0.40)  & 8.90$\pm$(0.48)  & 6.98$\pm$(0.98) & 8.65$\pm$(0.52) & 16.17$\pm$(0.60) & \textbf{3.59$\pm$(0.31)} \\
\midrule
\multirow{4}{*}{\textbf{Knapsack}} 
& Deg-2 & 34.93$\pm$(9.37) & 27.03$\pm$(8.43) & 24.75$\pm$(7.87) & 35.54$\pm$(4.70) & 21.90$\pm$(7.46) & 19.63$\pm$(4.5) & 20.27$\pm$(9.46) & \textbf{17.60$\pm$(3.38)} \\
& Deg-4 & 38.32$\pm$(4.44) & 26.37$\pm$(3.03) & 23.43$\pm$(4.94) & 46.87$\pm$(14.43) & 20.37$\pm$(5.18) & 18.45$\pm$(3.81) & 16.58$\pm$(3.68) & \textbf{15.21$\pm$(3.75)} \\
& Deg-6 & 33.85$\pm$(8.24) & 24.50$\pm$(1.19) & 20.07$\pm$(10.76) & 40.33$\pm$(5.63) & \textbf{17.45$\pm$(7.2)} & 17.51$\pm$(5.20) & 21.66$\pm$(6.46) & 17.91$\pm$(2.44) \\
& Deg-8 & 33.25$\pm$(6.48) & 20.38$\pm$(6.70) & 22.36$\pm$(7.89) & 34.07$\pm$(6.66) & 22.90$\pm$(11.48) & 21.48$\pm$(6.28) & 21.13$\pm$(7.40) & \textbf{19.29$\pm$(3.75)} \\
\midrule
\multirow{4}{*}{\textbf{Shortest Path}} 
& Deg-2 & 8.30$\pm$(2.35)  & 2.65$\pm$(0.25)  & 9.59$\pm$(0.75)  & 12.92$\pm$(3.63)  & 3.23$\pm$(0.72)  & 2.91$\pm$(0.93)  & 10.07$\pm$(1.2)  & \textbf{1.87$\pm$(0.20)} \\
& Deg-4 & 18.91$\pm$(5.30)  & 12.19$\pm$(1.04)  & 42.87$\pm$(2.57)  & 52.47$\pm$(6.49)  & 28.73$\pm$(11.23)   & 11.78$\pm$(2.89)  & 22.44$\pm$(2.84)  & \textbf{3.64$\pm$(0.43)} \\
& Deg-6 & 29.63$\pm$(7.20)  & 33.15$\pm$(4.60)  & 68.94$\pm$(6.79)  & 94.46$\pm$(10.91)  & 26.46$\pm$(9.31)   & 23.76$\pm$(4.21)  & 38.64$\pm$(2.3)  & \textbf{6.52$\pm$(0.71)} \\
& Deg-8 & 63.61$\pm$(18.82)  & 51.65$\pm$(13.77)  & 139.09$\pm$(22.08)  & 173.17$\pm$(36.28)  & 81.78$\pm$(21.82)   & 39.81$\pm$(5.46)  & 45.75$\pm$(5.10)  & \textbf{13.36$\pm$(2.59)} \\
\midrule
\textbf{Energy} 
& & 1.65$\pm$(0.23) & 1.67$\pm$(0.17) & 1.69$\pm$(0.13) & 1.59$\pm$(0.11) & 1.56$\pm$(0.11) & 1.49$\pm$(0.12) & 1.91$\pm$(0.22) & \textbf{1.09$\pm$(0.09) } \\
\midrule
\textbf{COVID Resource} 
& & 17.91$\pm$(1.85) & 16.83$\pm$(1.07) & 16.48$\pm$(2.25) & 16.59$\pm$(3.04) & 17.94$\pm$(3.29) &  \textbf{16.41$\pm$(3.8)} & 18.46$\pm$(3.2) & 16.86$\pm$(4.62) \\
\bottomrule
\end{tabular}
 }
\vspace{-.05in}
\end{table*}

\section{Experiments}

\subsection{Experimental Setup}
We evaluate the proposed framework on three synthetic optimization problems including Portfolio Management, Fractional Knapsack, and Shortest-Path. We also consider two real-world tasks, namely an Energy Management Problem dataset~\cite{ifrim2012properties, simonis1999csplib} and a COVID-19 resource allocation problem adopted from \citet{mandi2022decision, kong2022endtoendstochasticoptimizationenergybased}.

\paragraph{Synthetic Data.} We evaluate our approaches on synthetic benchmarks (Portfolio, Knapsack, and Shortest-Path), adopting the data-generation process from \cite{elmachtoub2022smart}. We first overview the optimization setup for the Portfolio problem. In the Portfolio problem, optimization parameters $c$ represent the asset prices and the dimension of $c_i$ is the number of assets. Our non-linear, risk-sensitive Portfolio problem is then formulated as:
\begin{equation}
\begin{aligned}
w^\star(x;\alpha) \coloneqq & \arg\min_{w}\ 
\mathrm{CVaR}_{p(c\mid x)}\!\left[-c^\top w + w^\top \Sigma w;\alpha\right] \\
\text{s.t. }\ &
w \in [0,1]^n,\ \mathbf{1}^\top w \le 1,
\end{aligned}
\end{equation}
where $\Sigma=LL^T + (0.01\sigma)^2 I$ is the covariance among the asset prices $c$, and the quadratic term $w^T \Sigma w$ reflects the amount of risk.
The configurations of our synthetic experiments include the training size, feature dimension $ d_x $, polynomial degree, and the noise scale $ \sigma $ that reflects the amount of variance in the parameter space and the non-linearity of the above stochastic optimization, since, by our construction, $\sigma$ would affect the magnitude of the quadratic term $w^T \Sigma w$. The problem setup and model configurations for the Fractional Knapsack and Shortest-Path problem are similar to that of Portfolio. Full details of the data synthesis process and the corresponding optimization formulation for each experiment are provided in Appendix~\ref{app:experiments}.

\paragraph{Real Data.} For the real data experiments, we consider a real-world Energy-cost Aware Scheduling problem and a COVID-19 resoure allocation problem that we adopt from \citet{mandi2022decision, kong2022endtoendstochasticoptimizationenergybased}. In the Energy-cost Aware Scheduling problem, we consider a demand response program in which an operator schedules electricity consumption $ p_t \in \mathbb{R}^{24} $ over a time horizon $ t \in \Omega_t $. The objective is to minimize the total cost of electricity while adhering to operational constraints. 
In the COVID-19 resource allocation problem, we focus on the problem of allocating the number of hospital beds $w \in \mathbb{R}^7$ for the next seven days based on the forecasted number of hospitalized patients $c \in \mathbb{R}^7$. 
The details of optimization problem in each experiment can be found in Appendix~\ref{app:experiments}.



\paragraph{Model Configuration.}

The hyperparameters in our learning algorithm include the decision cost weight $\beta$ and the negative log-likelihood weight $\gamma$ in \eqref{eq:gendfl-loss}, which serves as regularization. We introduce an additional hyperparameter $\beta$ in our experiment to study how different magnitude of DFL loss will affect the model's performance. We set $\gamma = 1$ across all experiments and study the effect of different $\beta$ values on \texttt{Gen-DFL}’s performance (Figure~\ref{fig:cvar_beta}). When $\beta=0$, the loss reduces to that of a standard generative model, only fitting data without considering decision costs, which results in the worst regret in all risk-sensitive settings. Increasing $\beta$ improves downstream decision quality across all risk levels. Full hyperparameter details are provided in Appendix~\ref{app:hyper}.

\begin{figure}
    \centering
    \includegraphics[width=\linewidth]{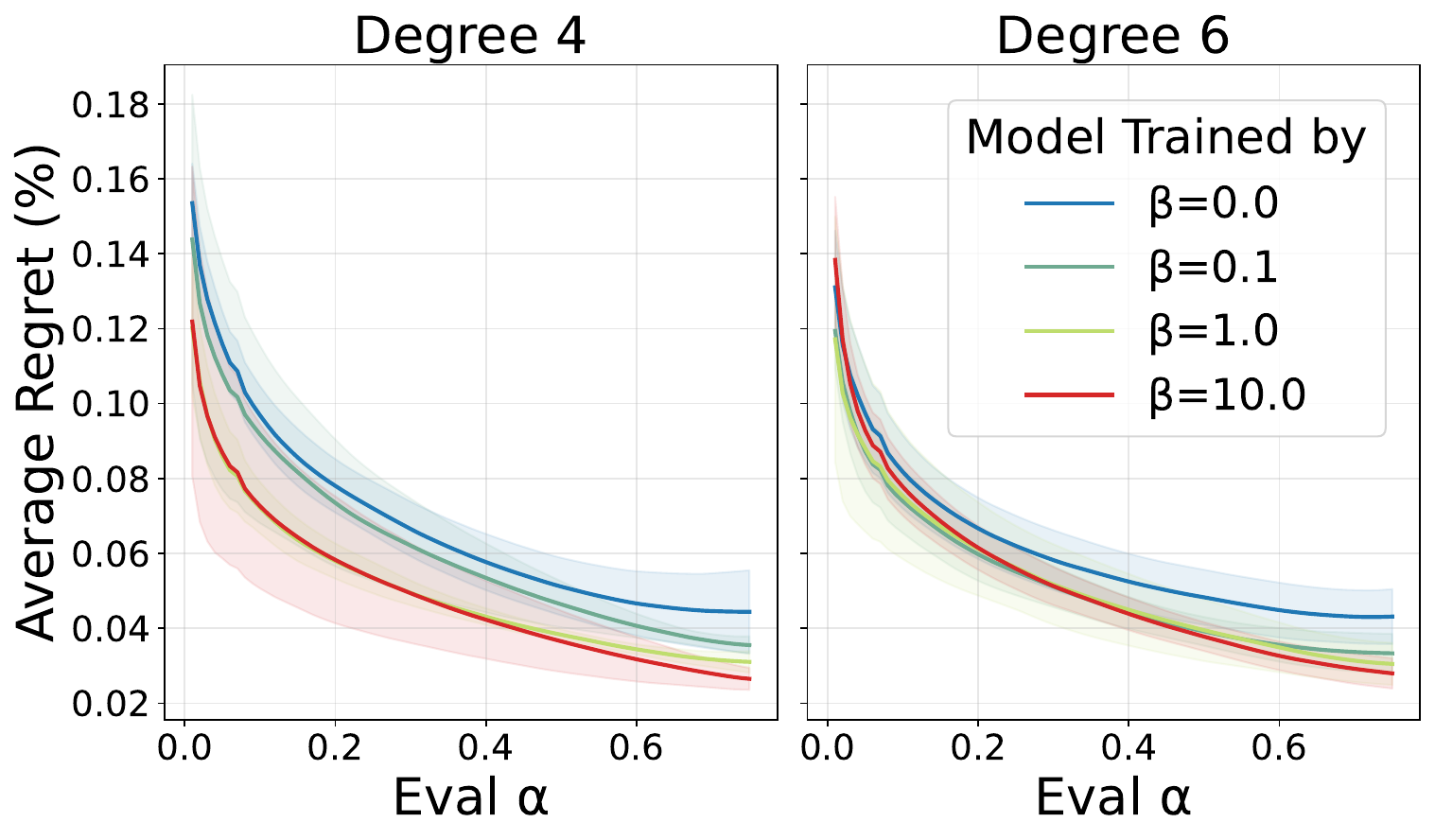}
        \caption{Decision quality against different risk-sensitive regions vs various $\beta$.}
        \label{fig:cvar_beta}
\vspace{-.1in}
\end{figure}

\paragraph{Baseline Methods.}
We evaluate the performance of \texttt{Gen-DFL} against various state-of-the-art Pred-DFL baselines across all tasks. Specifically, we compare against Smart-Predict-Then-Optimize (SPO+)~\cite{elmachtoub2022smart}, contrastive loss-based Pred-DFL models (NCE, MAP)~\cite{mulamba2020contrastive}, ranking-based Pred-DFL models~\cite{mandi2022decision}, and the recently proposed Pred-DFL approach with differentiable Distributionally Robust Optimization layers, which we refer to as Diff-DRO~\cite{ma2024differentiable}. These baselines represent a range of decision-focused learning strategies, differing in their loss formulations and optimization objectives. The main results of our comparison are summarized in Table~\ref{table:1}.

\begin{figure}
    \centering
    \includegraphics[width=\linewidth]{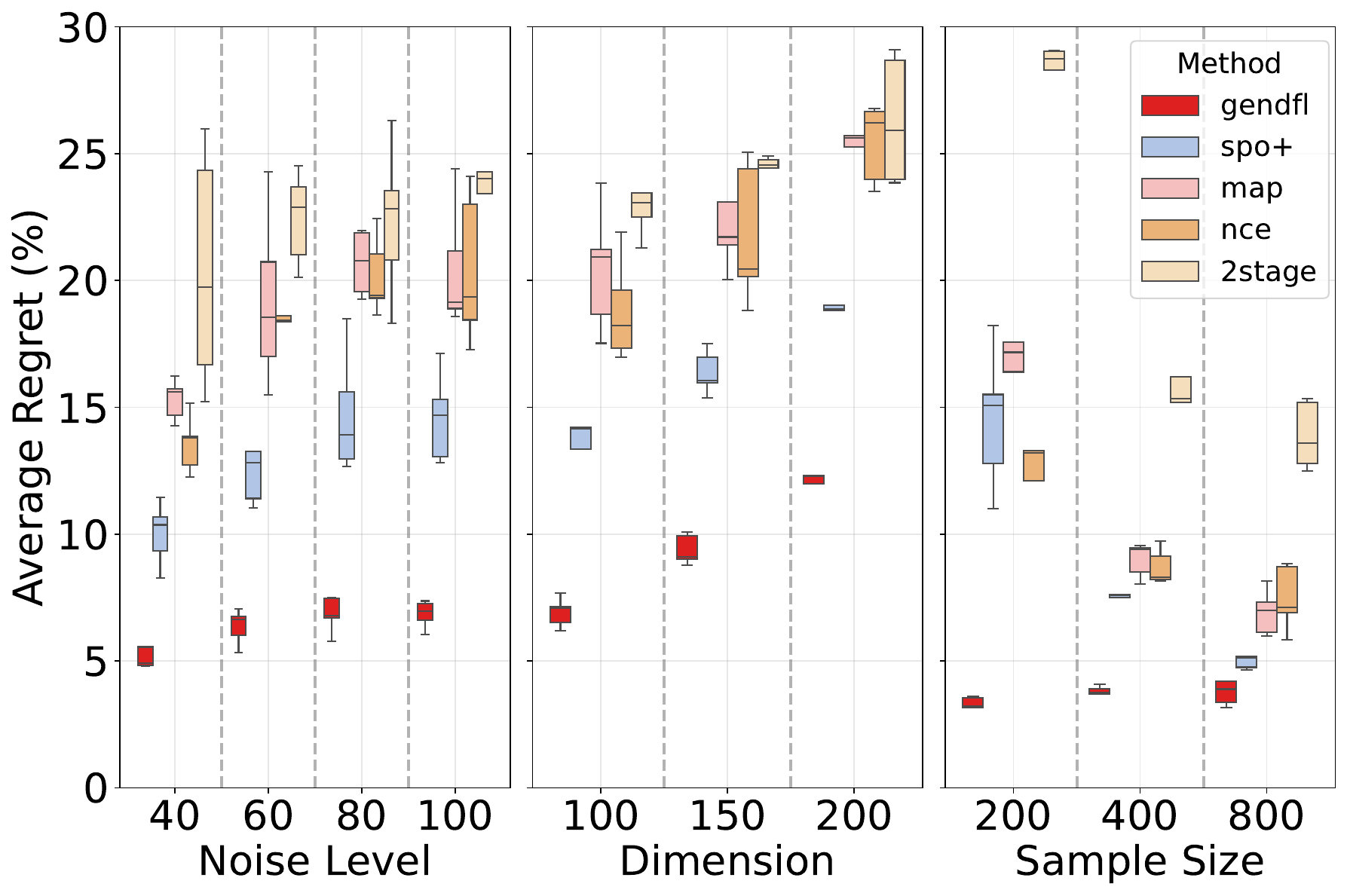}
    \caption{Decision quality in Portfolio problem under different settings ($\downarrow$ lower is better).
    }
    \label{fig:gendfl-preddfl}
\vspace{-.1in}
\end{figure}


We evaluate the decision quality of different models on various tasks in terms of the average relative regret, 
$\mathbb{E}_x\Big[\frac{\text{CVaR}_{p(c|x)}[f(c,\hat{w}^\star) - f(c, w^\star);\alpha]}{\mathbb{E}_{p(c|x)}[f(c, w^\star)]}\Big]\times100\%.
$
where lower $\alpha$ indicates greater risk sensitivity.
For our real data experiment, we will first train a proxy model $q(c|x)$ given the data, which will then be used to evaluate the average relative regret during evaluation. We set $\alpha = 1$ when we compare against the baseline models since the above metric is equivalent to the standard relative regret used in previous Pred-DFL literature, which makes the comparison fair.

\subsection{Results}

\begin{figure}
    \includegraphics[width=\linewidth]{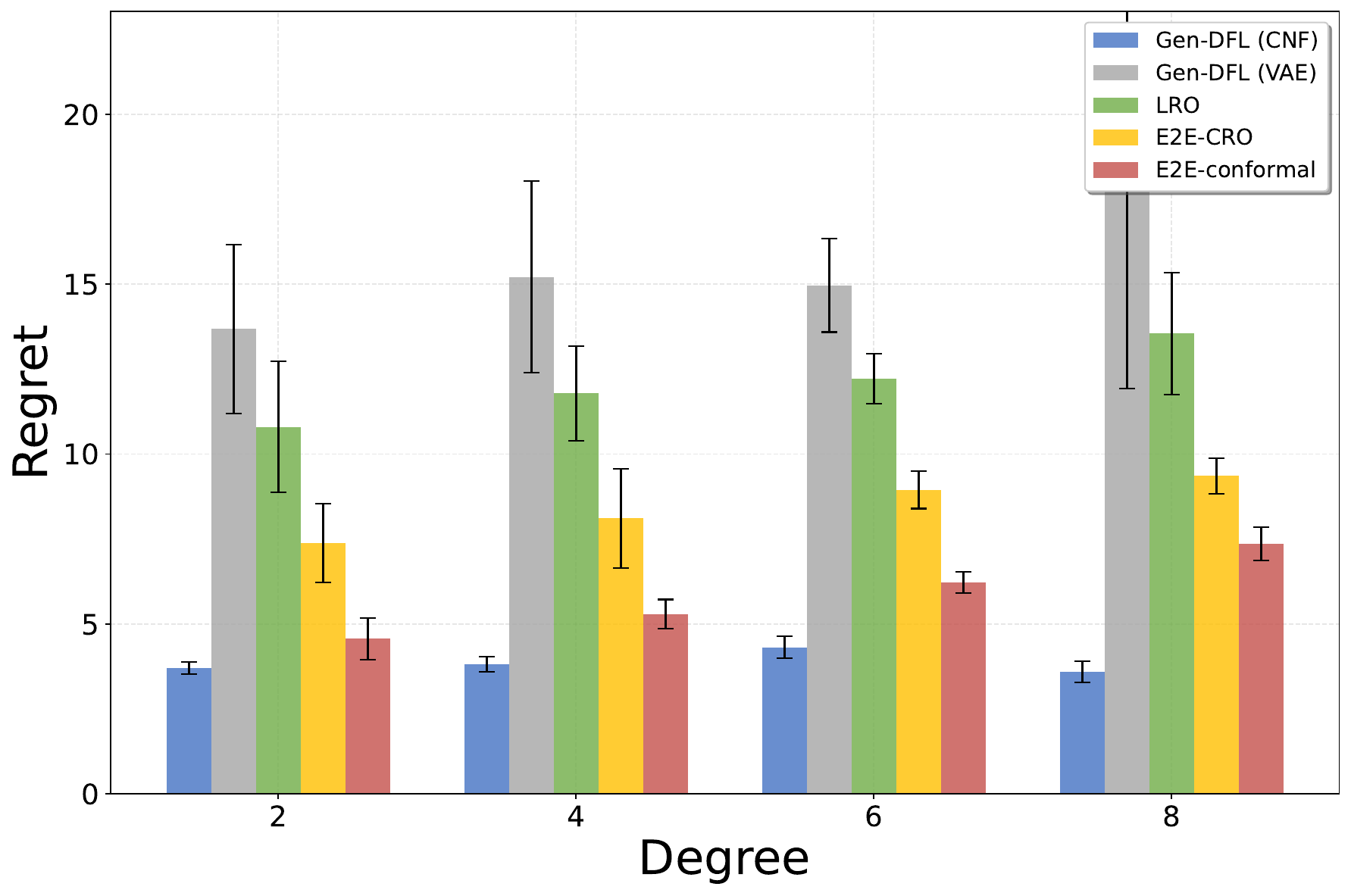}
    \caption{Comparison with conventional RO under varying polynomial degree in Portfolio problem.}
    \label{fig:ablation}
\vspace{-.2in}
\end{figure}

\paragraph{Comparison with Baselines.} Table~\ref{table:1} presents the comparative performance of \texttt{Gen-DFL}, Pred-DFL, and the two-stage method across different problem settings. \texttt{Gen-DFL} consistently outperforms baseline methods, reducing regret by up to 58.5\% compared to Diff-DRO and up to 48.5\% compared to SPO+ in Portfolio tasks.
\texttt{Gen-DFL}'s advantage is particularly pronounced in high-dimensional tasks like Shortest-Path (Deg-8), where it achieves a remarkable 83.7\% reduction in regret over SPO+ (13.36 vs.~81.78). This demonstrates \texttt{Gen-DFL}'s ability to overcome the curse of dimensionality by effectively capturing the distributional structure of $p(c|x)$ rather than relying on point estimates.
Conversely, in Knapsack (Deg-2), \texttt{Gen-DFL}'s improvements over SPO+ and Diff-DRO are more moderate (19.6\% and 10.3\% respectively), suggesting that the benefits of generative modeling are especially significant in problems where uncertainty is highly non-linear or where high-dimensional interactions dominate the optimization landscape. The statistical significance tests of Table~\ref{table:1} can be found in the Appendix~\ref{appendx:stat}.

Figure~\ref{fig:gendfl-preddfl} illustrates the impact of variance $\text{Var}[c| x]$, problem dimensionality, and training size on model performance. \texttt{Gen-DFL} demonstrates robustness across all variance levels ($\sigma \in [40, 60, 80, 100]$), effectively capturing the full conditional distribution $p(c|x)$, unlike Pred-DFL models, which rely on less expressive predictors and are more sensitive to variance. 
As dimensionality increases, baseline methods suffer from the curse of dimensionality, leading to higher regret. In contrast, \texttt{Gen-DFL} maintains superior performance by learning the structural complexity of $p(c|x)$, as predicted in Theorem~\ref{theorem:cvar_extension}. Additionally, while Pred-DFL performance deteriorates with smaller training sizes due to increased predictor bias, \texttt{Gen-DFL} remains stable by effectively modeling the underlying distribution. The quadratic term in the objective further amplifies the non-linearity in high-variance settings, demonstrating \texttt{Gen-DFL}'s adaptability to complex optimization problems.

We further compare our \texttt{Gen-DFL} with conventional data-driven Robust Optimization methods (\eg, LRO \cite{wang2025learningdecisionfocuseduncertaintysets}, E2E-CRO~\cite{chenreddy2024endtoendconditionalrobustoptimization}, E2E-Conformal~\cite{yeh2024endtoendconformalcalibrationoptimization}) in Figure~\ref{fig:ablation}. Instead of learning fixed-geometry uncertainty sets for a hard min-max problem, \texttt{Gen-DFL} learns a full generative model to directly capture complex uncertainties and minimizes the Conditional Value-at-Risk (CVaR). We also did an ablation study on the choice of the generative models which shows that the power of our method lies in this core paradigm; while a Conditional Normalizing Flow (CNF) architecture outperforms a Variational Autoencoder (VAE) (of which we attribute to the advantages of exact likelihood training over the ELBO approximation), the primary contribution remains the development and validation of the \texttt{Gen-DFL} framework itself. We thus leave the integration of other generative models or optimization schemes to future work.

\begin{figure}[t]
        \centering
        \includegraphics[width=\linewidth]{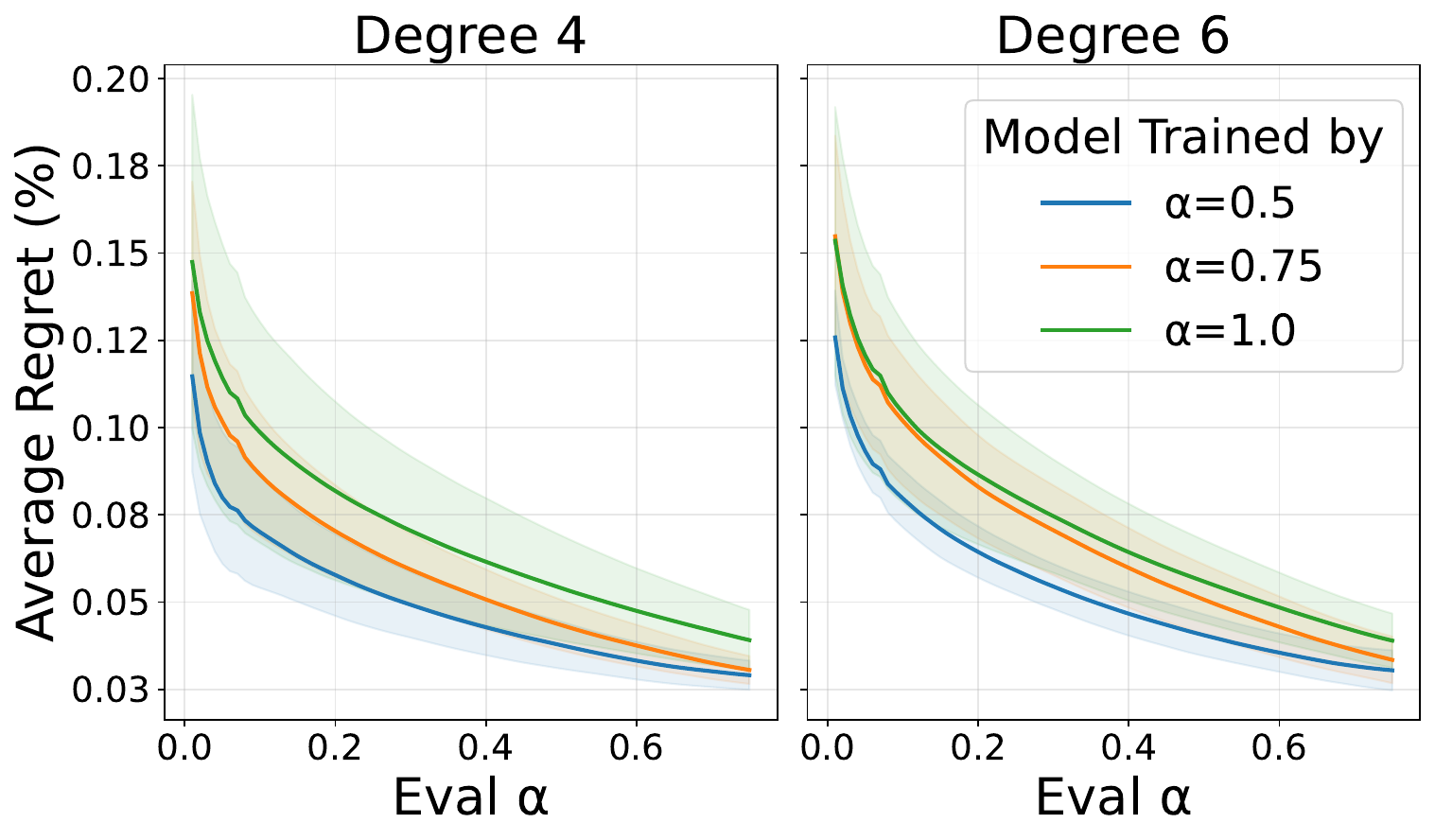}
        \caption{Training $\alpha$ vs.\ risk level $\alpha$ in Portfolio problem.}
        \label{fig:portfolio-cvar-low}
\end{figure}
\vspace{-.1in}
\begin{figure}
    \centering
    \includegraphics[width=\linewidth]{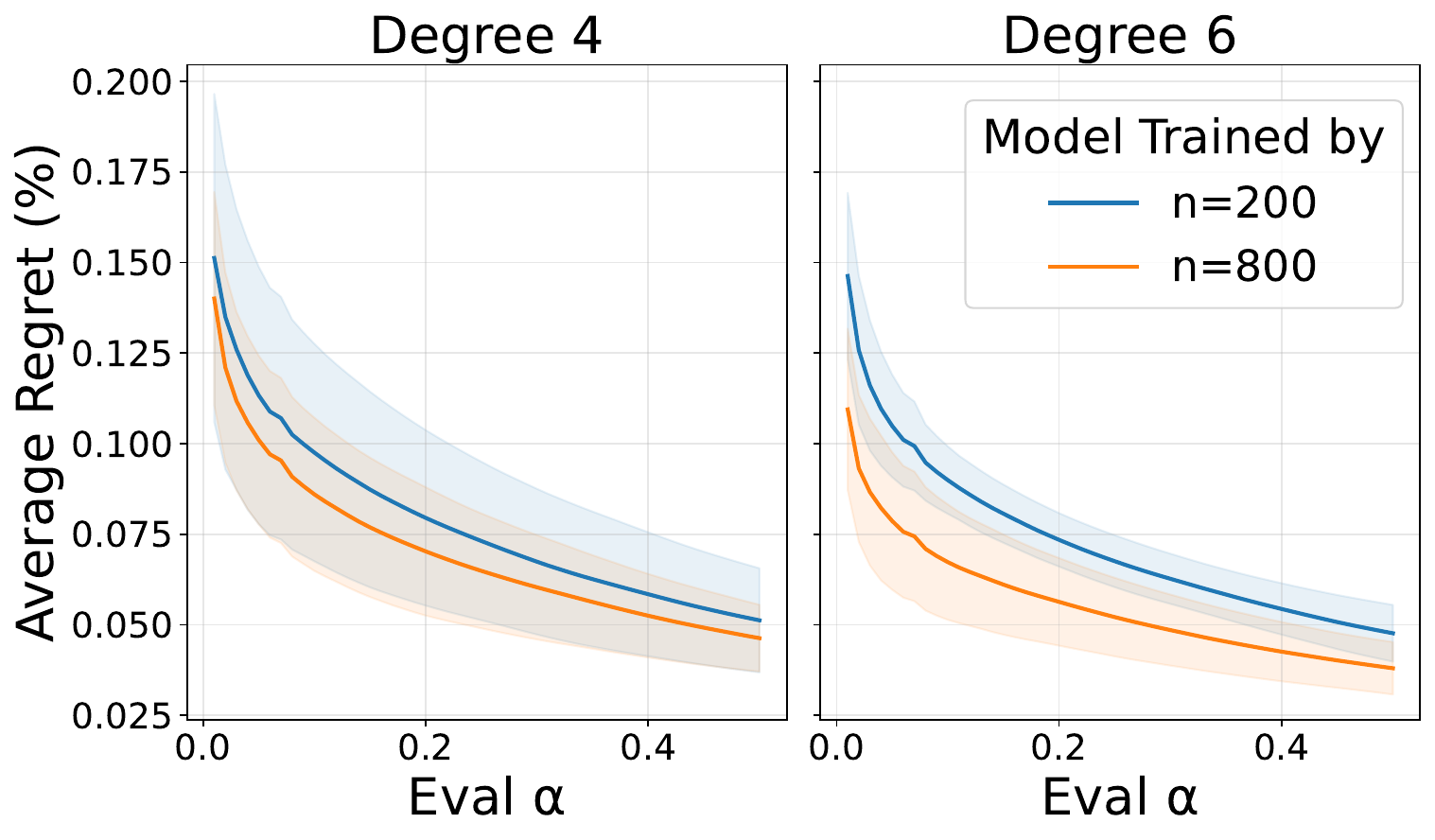}
    \caption{Generated samples vs.\ risk level $\alpha$ in Portfolio problem.}
    \label{fig:cvar_gen_samples}
\vspace{-.1in}
\end{figure}

\paragraph{Risk-sensitive Settings.} We also evaluate \texttt{Gen-DFL} under various risk-sensitive settings (indicated by the ``Eval $\alpha$'' on the x-axis, where smaller ``Eval $\alpha$'' indicates that we are evaluating under the higher-risk regions) using CVaR, which measures the decision quality (in terms of regret) over the worst-$\alpha\%$ of outcomes. Figure~\ref{fig:portfolio-cvar-low} shows that models trained with smaller $\alpha$ (\eg, $\alpha=0.5$) outperform those trained with larger $\alpha$ (\eg, $\alpha=1.0$), demonstrating better adaptation to adverse outcomes. The performance gap widens as risk sensitivity increases, confirming that smaller $\alpha$ enhances robustness while larger $\alpha$ prioritizes average-case performance.
To further assess stability, we examine the impact of sample size in the sample-average-approximation step (Figure~\ref{fig:cvar_gen_samples}). Increasing generated samples consistently improves decision quality across all risk levels, reinforcing the importance of uncertainty modeling in \texttt{Gen-DFL}. These results highlight \texttt{Gen-DFL}'s flexibility, making it particularly effective in high-stakes, risk-sensitive environments.

\section{Conclusion}
We presented \texttt{Gen-DFL}, a novel decision-focused learning framework that leverages generative modeling to solve robust decision-making problems under various risk-sensitive settings and provided theoretical analysis demonstrating the performance gain under various high-risk decision-making problems, verified by comprehensive experiments.


\bibliography{ref}
\bibliographystyle{icml2024}

\newpage
\appendix
\onecolumn

\section{Theorems and Proofs}

\subsection{Surrogate Loss Function}
\label{appendix:theorem}
In this subsection, we present a theoretical bound on the gap between the loss function $\ell(\theta; p,\alpha)$ under the ground-truth distribution $p(c \mid x)$ and the surrogate loss $\ell(\theta; q,\alpha)$ under a proxy distribution $q(c \mid x)$ that approximates $p(c \mid x)$.

\begin{theorem}
Let $ p(c|x) $ be the ground-truth distribution and $ q(c|x) $ be a surrogate distribution that approximates $ p(c|x) $. 

Under the assumption that the objective function $f(c,w)$ is $L_f$-Lipschitz continuous with respect to $c$ for a fixed decision variable $w$, the gap between the loss function $ \ell(\theta;p, \alpha) $ and the surrogate loss $ \ell(\theta;q, \alpha) $ is bounded by
\[
|\ell(\theta;p, \alpha) - \ell(\theta;q, \alpha)| \leq K_q \cdot \mathbb{E}_x \left[ \mathcal{W}(p(c|x), q(c|x)) \right],
\]
where $ \mathcal{W}(p(c|x), q(c|x)) $ is the Wasserstein distance between $ p(c|x) $ and $ q(c|x) $ and $K_q$ is some constant.
\end{theorem}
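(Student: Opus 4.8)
The plan is to reduce the statement to a single clean fact about how the conditional value-at-risk of a Lipschitz function changes under a change of the underlying measure. First I would note that the generative regularizer $\gamma\cdot\ell_\text{gen}(\theta)$ enters $\ell(\theta;p,\alpha)$ and $\ell(\theta;q,\alpha)$ identically (it references neither $p$ nor $q$), so it cancels in the difference. By the triangle inequality together with $|\mathbb{E}_x[\cdot]|\le\mathbb{E}_x[|\cdot|]$, it then suffices to produce, for each fixed covariate $x$, a bound of the form $\bigl|\text{Regret}_{\theta,p}(x;\alpha)-\text{Regret}_{\theta,q}(x;\alpha)\bigr|\le K_q\,\mathcal{W}(p(c|x),q(c|x))$ with a constant $K_q$ independent of $x$, and then integrate.

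The core lemma I would establish is: for any $L$-Lipschitz $h:\mathcal{C}\to\mathbb{R}$ and any two distributions $\mu,\nu$ on $\mathcal{C}$, $\bigl|\text{CVaR}_\mu[h;\alpha]-\text{CVaR}_\nu[h;\alpha]\bigr|\le\tfrac{L}{1-\alpha}\,\mathcal{W}(\mu,\nu)$. The proof uses the Rockafellar--Uryasev variational representation $\text{CVaR}_\mu[h;\alpha]=\inf_{\tau\in\mathbb{R}}\bigl\{\tau+\tfrac{1}{1-\alpha}\,\mathbb{E}_{c\sim\mu}[(h(c)-\tau)_+]\bigr\}$: for every fixed $\tau$ the integrand $c\mapsto\tau+\tfrac{1}{1-\alpha}(h(c)-\tau)_+$ is $\tfrac{L}{1-\alpha}$-Lipschitz in $c$ (composition of the $1$-Lipschitz map $t\mapsto(t-\tau)_+$ with $h$, scaled), so by Kantorovich--Rubinstein duality its expectations under $\mu$ and $\nu$ differ by at most $\tfrac{L}{1-\alpha}\mathcal{W}(\mu,\nu)$; taking the infimum over $\tau$ on both sides preserves this additive gap, giving the lemma. (The precise $1/(1-\alpha)$ factor is tied to the convention that the worst $\alpha\%$ region carries mass $1-\alpha$; I would double-check it against the definition used in the paper.)

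To apply the lemma I would write $\text{Regret}_{\theta,p}(x;\alpha)=\text{CVaR}_{p(c|x)}[D(c);\alpha]$ with $D(c)\coloneqq f(c,w_\theta^\star)-f(c,w^\star)$. Since $f(\cdot,w)$ is $L_f$-Lipschitz in $c$ for each fixed $w$, $D$ is $2L_f$-Lipschitz, and $w_\theta^\star$ is determined by the generative model $p_\theta$ (not by $p$ or $q$), hence common to both regret terms. The lemma then yields the pointwise bound with $K_q=2L_f/(1-\alpha)$, and taking $\mathbb{E}_x$ gives the theorem.

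The one subtlety I expect to be the main obstacle is the benchmark decision $w^\star$: it is the CVaR-optimal decision under $p(c|x)$, whereas the surrogate regret's benchmark is CVaR-optimal under $q(c|x)$, so $D$ is not literally the same function in the two terms. I would handle this with a second, envelope-style application of the lemma, $\bigl|\inf_w\text{CVaR}_p[f(c,w);\alpha]-\inf_w\text{CVaR}_q[f(c,w);\alpha]\bigr|\le\sup_w\bigl|\text{CVaR}_p[f(c,w);\alpha]-\text{CVaR}_q[f(c,w);\alpha]\bigr|\le\tfrac{L_f}{1-\alpha}\mathcal{W}(p,q)$, together with sub-additivity and monotonicity of CVaR, absorbing the extra term into $K_q$ (so $K_q=3L_f/(1-\alpha)$, say); alternatively, if $w^\star$ is intended as a fixed reference decision held common across $p$ and $q$, this subtlety disappears and $K_q=2L_f/(1-\alpha)$ as above.
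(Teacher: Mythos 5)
Your proposal is correct and shares the paper's overall strategy (reduce to a per-$x$ bound, then control a CVaR discrepancy between $p(c|x)$ and $q(c|x)$ via Lipschitzness of $f$ in $c$ and Kantorovich--Rubinstein duality), but it is executed along a genuinely different and more careful route. The paper first rewrites each regret as a \emph{difference of two CVaRs}, applies the reverse triangle inequality, and then asserts ``by definition of CVaR'' that $|\mathrm{CVaR}_p[g]-\mathrm{CVaR}_q[g]|$ is dominated by the Kantorovich--Rubinstein supremum, obtaining $K_q=2L_f$ with no dependence on $\alpha$; this step is loose, since CVaR is not an expectation of a $1$-Lipschitz test function, and the rewriting silently identifies $\mathrm{CVaR}[X-Y]$ with $\mathrm{CVaR}[X]-\mathrm{CVaR}[Y]$. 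You instead keep the regret as a single CVaR of the $2L_f$-Lipschitz function $D(c)=f(c,w_\theta^\star)-f(c,w^\star)$ (which matches the paper's actual definition of $\mathrm{Regret}_{\theta,p}$) and prove the CVaR stability lemma properly through the Rockafellar--Uryasev variational representation, which is exactly the device the paper itself uses in its later CVaR-estimation theorems; this yields the correct $1/(1-\alpha)$ (in the paper's convention, $1/\alpha$) factor in $K_q$, which the statement absorbs into ``some constant'' but which matters for interpreting the bound as $\alpha$ varies. You also flag the one real gap in the paper's argument --- that the benchmark $w^\star$ in $\ell(\theta;q,\alpha)$ is the CVaR-optimizer under $q$, not $p$ --- and offer an envelope-style fix; note only that splicing that envelope bound back into the CVaR of a difference needs CVaR subadditivity and is not fully airtight as sketched, though under the reading where $w^\star$ is a fixed common reference (which is what the paper's own proof implicitly assumes) your clean bound $K_q=2L_f/(1-\alpha)$ goes through without it. Net: your version is the more defensible proof; the paper's is shorter but skips the $\alpha$-dependence and the benchmark issue.
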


\begin{proof}

First, by linearity of expectation and the triangle inequality, we see that
\begin{equation}    
\left|\ell(\theta;p, \alpha) - \ell(\theta;q, \alpha) \right| \le \mathbb{E}_x\left| A - B\right|, 
\end{equation}
where
\[
A = |\text{CVaR}_{p(c|x)}[f(c, w_\theta^\star)] - \text{CVaR}_{p(c|x)}[f(c, w^\star)]|, \qquad B =  |\text{CVaR}_{q(c|x)}[f(c, w_\theta^\star)] - \text{CVaR}_{q(c|x)}[f(c, w^\star)]|.
\]
For simplicity, we omit $\alpha$ inside $\text{CVaR}$ for now.

Then, we can begin by examining the gap between the expectations under $ p(c|x) $ and $ q(c|x) $ for a fixed context $ x $.

By the reverse triangle inequality ($\left| |x| - |y| \right| \le \left| x-y \right|$), we have
\begin{align*}
\left| A - B\right| \le 
\left| \text{CVaR}_{p(c|x)}[f(c, w^\star)] -  \text{CVaR}_{q(c|x)}[f(c, w^\star)]\right|+ \left| \text{CVaR}_{p(c|x)}[f(c, w_\theta^\star)] -  \text{CVaR}_{q(c|x)}[f(c, w_\theta^\star)]\right|.
\end{align*} 
Let's define $g(c) = f(c, w^\star)$ and $h(c) = f(c, w_\theta^\star)$. By assumption, $ f(c, w) $ is $ L_f $-Lipschitz continuous with respect to $ c $, which implies that that $g(c)$ and $h(c)$ are also $ L_f $-Lipschitz. Hence, by the Kantorvorich-Rubinstein duality for the Wasserstein distance, we have,
\[
\mathcal{W}(p(c|x), q(c|x)) = \sup_{\|g\|_{\text{Lip}} \leq 1} \left| \mathbb{E}_{c \sim p(c|x)}[g(c)] - \mathbb{E}_{c \sim q(c|x)}[g(c)] \right| = \sup_{\|h\|_{\text{Lip}} \leq 1} \left| \mathbb{E}_{c \sim p(c|x)}[h(c)] - \mathbb{E}_{c \sim q(c|x)}[h(c)] \right|,
\]
where the supremum is over all functions $ g, h $ that are 1-Lipschitz.

By definition of CVaR, we can see that,
\[
\left| \text{CVaR}_{p(c|x)}[f(c, w^\star)] -  \text{CVaR}_{q(c|x)}[f(c, w^\star)]\right| \le \sup_{\|h\|_{\text{Lip}} \leq 1} \left| \mathbb{E}_{c \sim p(c|x)}[h(c)] - \mathbb{E}_{c \sim q(c|x)}[h(c)] \right|.
\]

Again, using the assumption that $g(c)$ and $h(c)$ are also $ L_f $-Lipschitz, we can bound the gap in (2) by
\[
\left| \text{CVaR}_{p(c|x)}[f(c, w^\star)] -  \text{CVaR}_{q(c|x)}[f(c, w^\star)]\right|
+ \left| \text{CVaR}_{p(c|x)}[f(c, w_\theta^\star)] -  \text{CVaR}_{q(c|x)}[f(c, w_\theta^\star)]\right| \le 2L_f \mathcal{W}(p(c|x), q(c|x)).
\]

Finally, taking the expectation over $x$ on both sides and using equation (1) and set the constant $K_q = 2L_f$, we get:
\[
|\ell(\theta;p, \alpha) - \ell(\theta;q, \alpha)| \leq K_q \cdot \mathbb{E}_x \left[ \mathcal{W}(p(c|x), q(c|x)) \right].
\]

This completes the proof.
\end{proof}

\subsection{CVaR/Quantile Regression}

\begin{theorem}[Finite-Sample Bound for CVaR Estimation]
\label{thm:unconditional_CVaR_bound}
Suppose $Y$ takes values in the interval $[m, M]$. Let $\widehat{\mathrm{CVaR}}_{\alpha}$ be the empirical estimator derived from 
\[
  \hat{\phi}_n(\eta)
  \;=\;
  \eta \;+\;\frac{1}{\alpha}\,\frac{1}{n}\sum_{i=1}^n (Y_i - \eta)_+,
  \quad
  \widehat{\mathrm{CVaR}}_{\alpha}
  \;=\;
  \inf_{\eta \in \mathbb{R}} \,\hat{\phi}_n(\eta),
\]
where $(y - \eta)_+ \coloneqq \max\{y - \eta,0\}$ and $Y_1,\dots,Y_n$ are i.i.d.\ samples of $Y$. 
Then there is a universal constant $C>0$ such that for all $\delta>0$, with probability at least $1-\delta$,
\[
  \bigl|\,\widehat{\mathrm{CVaR}}_\alpha \;-\; \mathrm{CVaR}_\alpha(Y)\bigr|
  \;\le\;
  C\,\frac{(M-m)}{\alpha}\,\sqrt{\frac{\ln(1/\delta)}{n}}.
\]
In other words, the estimation error for $\mathrm{CVaR}_{\alpha}$ converges on the order of 
$\sqrt{\ln(1/\delta) / n}$ as $n$ grows.
\end{theorem}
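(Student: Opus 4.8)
The plan is to exploit the Rockafellar--Uryasev variational representation of CVaR, namely $\mathrm{CVaR}_\alpha(Y) = \inf_{\eta} \phi(\eta)$ with $\phi(\eta) = \eta + \tfrac1\alpha\,\mathbb{E}[(Y-\eta)_+]$, and its empirical counterpart $\hat\phi_n$. Since both $\mathrm{CVaR}_\alpha(Y)$ and $\widehat{\mathrm{CVaR}}_\alpha$ are infima of $\phi$ and $\hat\phi_n$ respectively, the error is controlled by the uniform deviation $\sup_\eta |\hat\phi_n(\eta) - \phi(\eta)|$: indeed for any functions $a,b$ one has $|\inf a - \inf b| \le \sup_\eta |a(\eta)-b(\eta)|$. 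So the first step is to reduce the claim to bounding $\sup_\eta |\hat\phi_n(\eta) - \phi(\eta)| = \tfrac1\alpha \sup_\eta \bigl| \tfrac1n\sum_i (Y_i-\eta)_+ - \mathbb{E}[(Y-\eta)_+] \bigr|$.

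The second step is to restrict the range of $\eta$ that matters. Because $Y \in [m,M]$, the map $\eta \mapsto (y-\eta)_+$ is constant ($=0$) for $\eta \ge M$ and affine ($=y-\eta$, hence $\hat\phi_n$ and $\phi$ differ only by the fixed quantity $\bar Y_n - \mathbb{E}[Y]$) for $\eta \le m$; moreover the optimal $\eta^\star = \mathrm{VaR}_\alpha(Y)$ lies in $[m,M]$, and likewise the empirical minimizer. So it suffices to take the supremum over $\eta \in [m,M]$, and on that interval each function $y \mapsto (y-\eta)_+$ is bounded in $[0, M-m]$ and $1$-Lipschitz in $\eta$.

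The third step is a uniform concentration / Lipschitz-net argument. Fix a grid of $\eta$-values in $[m,M]$ of spacing $\varepsilon$ (so $\mathcal{O}((M-m)/\varepsilon)$ points); at each grid point apply Hoeffding's inequality to the i.i.d.\ bounded variables $(Y_i-\eta)_+ \in [0, M-m]$, giving deviation $\mathcal{O}\!\bigl((M-m)\sqrt{\ln(1/\delta')/n}\bigr)$ with failure probability $\delta'$; union-bound over the grid; and interpolate between grid points using the $1$-Lipschitz (in $\eta$) property of $\eta \mapsto \tfrac1n\sum_i(Y_i-\eta)_+$ and of $\eta\mapsto\mathbb{E}[(Y-\eta)_+]$, which costs an extra $\varepsilon$. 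Choosing $\varepsilon \asymp (M-m)/\sqrt{n}$ balances the terms and yields $\sup_{\eta\in[m,M]}|\hat\phi_n(\eta)-\phi(\eta)| \le C'(M-m)\sqrt{\ln(1/\delta)/n}$; dividing by $\alpha$ (carried from the $\tfrac1\alpha$ prefactor) gives the stated bound. Alternatively one can replace steps three by a direct bounded-differences (McDiarmid) argument applied to the functional $Y_{1:n}\mapsto \sup_\eta|\hat\phi_n(\eta)-\phi(\eta)|$ together with a symmetrization/Rademacher-complexity bound on its expectation — the class $\{y\mapsto (y-\eta)_+ : \eta\in[m,M]\}$ has VC-subgraph dimension $2$, so its Rademacher complexity is $\mathcal{O}((M-m)/\sqrt n)$.

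The main obstacle is the uniformity over $\eta$: a naive pointwise Hoeffding bound only controls the error at a fixed $\eta$, whereas both CVaR estimators optimize over $\eta$, so the crux is turning the pointwise bound into a uniform one without losing more than a $\sqrt{\ln(1/\delta)}$ factor. Handling this cleanly — via either the Lipschitz-net discretization with a correctly tuned mesh, or via the bounded VC-dimension of the threshold class — is the only real work; the reduction in step one and the range-restriction in step two are routine, and Hoeffding supplies the rest.
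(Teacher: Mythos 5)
Your proposal follows essentially the same route as the paper's proof: the Rockafellar--Uryasev variational representation, reduction of the CVaR estimation error to the uniform deviation $\sup_\eta|\hat\phi_n(\eta)-\phi(\eta)|$, and a Hoeffding/Rademacher uniform-convergence bound with the $1/\alpha$ factor carried through. In fact you supply details the paper only asserts --- the restriction of the supremum to $\eta\in[m,M]$ (needed because $(y-\eta)_+$ is not uniformly bounded by $M-m$ over all $\eta\in\mathbb{R}$, though the deviation is constant for $\eta\le m$), the explicit net/VC argument behind the uniform bound (where your Rademacher-plus-McDiarmid alternative is the variant that avoids the spurious $\sqrt{\ln n}$ factor a naive union bound over the grid would introduce) --- and your one-line inequality $|\inf_\eta a(\eta)-\inf_\eta b(\eta)|\le\sup_\eta|a(\eta)-b(\eta)|$ cleanly replaces the paper's three-$\varepsilon$ comparison of minimizers.
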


\begin{remark}
Here, $\mathrm{CVaR}_{\alpha}(Y) = \mathbb{E}[\,Y \mid Y \le \mathrm{VaR}_{\alpha}(Y)\,]$, and
\[
  \mathrm{VaR}_{\alpha}(Y) 
  \;=\; 
  \inf\{\,t : \Pr(Y \le t)\;\ge\;\alpha\}.
\]
The key step in the proof is the Rockafellar--Uryasev identity,
\[
  \mathrm{CVaR}_\alpha(Y) 
  \;=\;
  \inf_{\eta \in \mathbb{R}}
  \Bigl\{
    \eta \;+\;\tfrac{1}{\alpha}\,\mathbb{E}\bigl[(\,Y-\eta\,)_+\bigr]
  \Bigr\},
\]
combined with uniform convergence arguments (\eg,\ Hoeffding or Rademacher complexity bounds).
\end{remark}

\begin{proof}
\textbf{Step 1: Rockafellar--Uryasev Representation.}

Recall the identity (Rockafellar--Uryasev):
\[
  \mathrm{CVaR}_\alpha(Y) 
  \;=\;
  \min_{\eta \in \mathbb{R}}
  \Bigl(
    \eta \;+\; \frac{1}{\alpha}\,\mathbb{E}\bigl[(\,Y-\eta\,)_+\bigr]
  \Bigr).
\]
Set 
\[
  \phi(\eta) 
  \;=\;
  \eta \;+\;\frac{1}{\alpha}\,\mathbb{E}[(\,Y-\eta\,)_+].
\]
Then $\mathrm{CVaR}_\alpha(Y) = \min_{\eta \in \mathbb{R}}\, \phi(\eta)$.

\noindent
\textbf{Step 2: Empirical Estimator.}

Given i.i.d.\ samples $Y_1,\dots,Y_n$, define the empirical counterpart
\[
  \hat{\phi}_n(\eta)
  \;=\;
  \eta 
  \;+\;
  \frac{1}{\alpha}\,\frac{1}{n}\,\sum_{i=1}^n (\,Y_i - \eta\,)_+,
\]
and let
\[
  \widehat{\mathrm{CVaR}}_{\alpha} 
  \;=\;
  \min_{\eta\in\mathbb{R}}\;\hat{\phi}_n(\eta).
\]
Similarly, let $\eta^* \in \arg\min_{\eta}\phi(\eta)$ and 
$\hat{\eta}_n \in \arg\min_{\eta}\hat{\phi}_n(\eta)$.

\noindent
\textbf{Step 3: Uniform Convergence.}

Observe that
\[
  |\hat{\phi}_n(\eta) - \phi(\eta)|
  \;=\;
  \Bigl|\,
    \frac{1}{\alpha} \bigl(\tfrac{1}{n}\sum_{i=1}^n (\,Y_i - \eta\,)_+ - 
    \mathbb{E}[(\,Y-\eta\,)_+]\bigr)
  \Bigr|
  \;\le\;
  \frac{1}{\alpha}
  \sup_{\eta\in\mathbb{R}}
  \Bigl|
    \tfrac{1}{n}\sum_{i=1}^n f_\eta(Y_i)
    \;-\;
    \mathbb{E}[\,f_\eta(Y)\bigr]
  \Bigr|,
\]
where $f_\eta(y) \coloneqq (y-\eta)_+$ is bounded by $(M-m)$ if $y\in[m,M]$.  
By standard Hoeffding (or VC / Rademacher) arguments, with probability $\ge1-\delta$,
\[
  \sup_{\eta\in\mathbb{R}}
  \Bigl|
    \tfrac{1}{n}\sum_{i=1}^n (Y_i - \eta)_+ 
    \;-\;
    \mathbb{E}[(Y-\eta)_+]
  \Bigr|
  \;\le\;
  C_1\,(M-m)\,\sqrt{\frac{\ln(1/\delta)}{n}}
\]
for some universal constant $C_1>0$.  
Hence,
\[
  \sup_{\eta\in\mathbb{R}}
  \bigl|\hat{\phi}_n(\eta) - \phi(\eta)\bigr|
  \;\le\;
  \frac{C_1\,(M-m)}{\alpha}\,\sqrt{\frac{\ln(1/\delta)}{n}}
  \;=\;: \varepsilon_n.
\]

\noindent
\textbf{Step 4: Error Between Minimizers.}

By definition of $\hat{\eta}_n$ and $\eta^*$,
\[
  \hat{\phi}_n(\hat{\eta}_n)
  \;\le\;
  \hat{\phi}_n(\eta^*).
\]
Also,
\[
  \phi(\hat{\eta}_n) - \phi(\eta^*)
  \;\le\;
  \bigl[\,\hat{\phi}_n(\hat{\eta}_n) - \phi(\hat{\eta}_n)\bigr]
  \;+\;
  \bigl[\,\hat{\phi}_n(\eta^*) - \phi(\eta^*)\bigr]
  \;\le\; 2\,\varepsilon_n.
\]
Thus
\[
  \phi(\hat{\eta}_n)
  \;\le\;
  \phi(\eta^*) + 2\,\varepsilon_n
  \;\Longrightarrow\;
  \hat{\phi}_n(\hat{\eta}_n)
  \;=\;
  \phi(\hat{\eta}_n)
  +\bigl[\hat{\phi}_n(\hat{\eta}_n)- \phi(\hat{\eta}_n)\bigr]
  \;\le\;
  \phi(\eta^*) + 3\,\varepsilon_n.
\]
Similarly, by symmetry, we get $\phi(\eta^*) \le \hat{\phi}_n(\hat{\eta}_n) + 3\,\varepsilon_n$,
so
\[
  \bigl|\hat{\phi}_n(\hat{\eta}_n) - \phi(\eta^*)\bigr|
  \;\le\; 3\,\varepsilon_n.
\]
Since $\mathrm{CVaR}_\alpha(Y)=\phi(\eta^*)$ and 
$\widehat{\mathrm{CVaR}}_{\alpha}=\hat{\phi}_n(\hat{\eta}_n)$, we conclude
\[
  \bigl|\widehat{\mathrm{CVaR}}_{\alpha} - \mathrm{CVaR}_\alpha(Y)\bigr|
  \;\le\; 3\,\varepsilon_n
  \;\;=\;
  \mathcal{O}\!\Bigl(\tfrac{M-m}{\alpha}\,\sqrt{\tfrac{\ln(1/\delta)}{n}}\Bigr).
\]
Finally, we absorb constant factors into a single $C$, yielding the stated bound.
\end{proof}

\begin{theorem}[Generalization Bound for Conditional CVaR Estimation]
\label{thm:conditional_CVaR_bound}
Let $(X,Y)$ be distributed on $\mathcal{X}\times\mathbb{R}$, and let
$\mathcal{G}$ be a class of measurable functions $g:\mathcal{X}\to \mathbb{R}$.
Define the population Rockafellar--Uryasev (RU) risk of any predictor $g$ by
\[
  R(g) 
  \;\coloneqq\;
  \mathbb{E}\!\Bigl[
    g(X) 
    \;+\;
    \frac{1}{\alpha}\,\bigl(Y - g(X)\bigr)_{+}
  \Bigr],
\]
and let
\[
  R^* 
  \;=\;
  \inf_{g \in \mathcal{G}}\,R(g), 
  \quad
  g^*\in \arg\min_{g\in \mathcal{G}}\;R(g).
\]
Given i.i.d.\ samples $\{(x_i,y_i)\}_{i=1}^n$, define the empirical RU risk
\[
  \widehat{R}_n(g)
  \;\coloneqq\;
  \frac{1}{n}\,\sum_{i=1}^n\Bigl[
    g(x_i) + \tfrac{1}{\alpha}\,\bigl(y_i - g(x_i)\bigr)_{+}
  \Bigr],
\]
and let
\[
  \hat{g}_n 
  \;\;\in\;
  \arg\min_{g\in\mathcal{G}} 
  \;\widehat{R}_n(g).
\]
Suppose that, with probability at least $1-\delta$, 
\[
   \sup_{g\in \mathcal{G}}
   \Bigl|
      \widehat{R}_n(g) - R(g)
   \Bigr|
   \;\;\le\;\;
   \varepsilon_n,
\]
where $\varepsilon_n$ is a term that typically of the order $\mathcal{O}\!\Bigl(\tfrac{1}{\alpha}\,\sqrt{\tfrac{\ln(1/\delta)}{n}}\Bigr)$ 
under standard assumptions (boundedness, sub-Gaussian tails, etc.).  
Then on that event,
\[
  R(\hat{g}_n) 
  \;-\; 
  R^*
  \;\;\le\;\;
  2\,\varepsilon_n.
\]
Hence the learned predictor $\hat{g}_n$ achieves a CVaR-type risk 
within $2\,\varepsilon_n$ of the best $g^*\in \mathcal{G}$, with high probability.
\end{theorem}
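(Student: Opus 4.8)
The plan is to prove this as a standard empirical-risk-minimization oracle inequality: decompose the excess risk $R(\hat g_n) - R^*$ into two applications of the assumed uniform-convergence bound plus the defining optimality of $\hat g_n$ as the empirical minimizer. The key point is that, once the uniform-convergence hypothesis is granted, no further structural property of the Rockafellar--Uryasev functional is needed; the real content — verifying $\varepsilon_n = \mathcal{O}(\tfrac1\alpha \sqrt{\ln(1/\delta)/n})$ via boundedness, the contraction property of $(\cdot)_+$, and a Rademacher/covering bound for $\mathcal{G}$ — is exactly the analogue of Step~3 in the proof of Theorem~\ref{thm:unconditional_CVaR_bound} and is assumed here rather than re-derived.

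Concretely, I would first condition on the event $\mathcal{E}$ (of probability at least $1-\delta$) on which $\sup_{g\in\mathcal{G}}|\widehat R_n(g) - R(g)| \le \varepsilon_n$. On $\mathcal{E}$: applying the bound at $g=\hat g_n$ gives $R(\hat g_n) \le \widehat R_n(\hat g_n) + \varepsilon_n$; since $\hat g_n$ minimizes $\widehat R_n$ over $\mathcal{G}$ and $g^*\in\mathcal{G}$, we have $\widehat R_n(\hat g_n) \le \widehat R_n(g^*)$; and applying the bound again at $g=g^*$ gives $\widehat R_n(g^*) \le R(g^*) + \varepsilon_n = R^* + \varepsilon_n$. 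Chaining the three inequalities yields $R(\hat g_n) \le R^* + 2\varepsilon_n$ on $\mathcal{E}$, which is the claim. Note that using the single event $\mathcal{E}$ for both invocations of the bound is what makes the two applications simultaneously valid, so there is no hidden union bound or factor-of-two loss in $\delta$.

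The only subtlety to flag is the treatment of the $\arg\min$/$\inf$: if exact minimizers of $R$ or $\widehat R_n$ do not exist, one replaces $g^*$ and $\hat g_n$ by near-minimizers attaining the respective infima up to a vanishing slack, which perturbs the constant in front of $\varepsilon_n$ only negligibly. Beyond this, there is no real obstacle in the argument itself — it is a three-line chaining. The genuinely hard part, which the theorem deliberately offloads into its hypothesis, is establishing that uniform-convergence inequality with the stated $1/\alpha$ scaling; a self-contained treatment would note that $\eta\mapsto \eta + \tfrac1\alpha(y-\eta)_+$ is $(1+\tfrac1\alpha)$-Lipschitz in $\eta$, use Talagrand's contraction lemma to reduce the Rademacher complexity of the induced loss class to that of $\mathcal{G}$, and then apply a McDiarmid/bounded-differences concentration step together with the boundedness of $Y$ and of the predictors to obtain the $\tfrac1\alpha\sqrt{\ln(1/\delta)/n}$ rate.
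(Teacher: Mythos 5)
Your proposal is correct and follows essentially the same route as the paper's proof: the three-inequality chaining you describe is exactly the paper's decomposition of the excess risk into $(A)+(B)+(C)$ with $(B)\le 0$ by empirical optimality, followed by bounding $(A)$ and $(C)$ each by $\varepsilon_n$ on the uniform-convergence event. Your closing remarks on the $1/\alpha$ scaling likewise mirror the paper's Step~4, so there is nothing substantive to add.
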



\begin{proof}
\textbf{Step 1: Setup \& Definitions.}

For each $g\in \mathcal{G}$, define the population RU risk
\[
  R(g) 
  \;=\;
  \mathbb{E}\Bigl[
    g(X) + \tfrac{1}{\alpha}\,\bigl(Y - g(X)\bigr)_{+}
  \Bigr].
\]
The empirical counterpart based on samples $(x_i,y_i)_{i=1}^n$ is
\[
  \widehat{R}_n(g)
  \;=\;
  \frac{1}{n}\,\sum_{i=1}^n 
  \bigl[
    g(x_i) 
    \;+\;
    \tfrac{1}{\alpha}\,(y_i - g(x_i))_+
  \bigr].
\]
Let 
\[
  \hat{g}_n \;\in\; \arg\min_{g \in \mathcal{G}}\;\widehat{R}_n(g),
  \quad
  g^* \;\in\; \arg\min_{g \in \mathcal{G}}\;R(g).
\]

\noindent
\textbf{Step 2: Decompose the Excess Risk.}

We want $R(\hat{g}_n) - R(g^*)$.  
Note that
\[
   R(\hat{g}_n) \;-\; R(g^*)
   \;=\;
   \underbrace{\bigl[R(\hat{g}_n) - \widehat{R}_n(\hat{g}_n)\bigr]}_{(A)}
   \;+\;
   \underbrace{\bigl[\widehat{R}_n(\hat{g}_n) - \widehat{R}_n(g^*)\bigr]}_{(B)}
   \;+\;
   \underbrace{\bigl[\widehat{R}_n(g^*) - R(g^*)\bigr]}_{(C)}.
\]
Since $\hat{g}_n$ minimizes $\widehat{R}_n$, the middle term $(B)\le 0$.  Hence
\[
   R(\hat{g}_n) - R(g^*)
   \;\le\;
   (A) + (C).
\]
But
\[
   (A) 
   \;=\;
   R(\hat{g}_n) - \widehat{R}_n(\hat{g}_n)
   \;\le\;
   \sup_{g\in\mathcal{G}}\bigl|\,R(g) - \widehat{R}_n(g)\bigr|,
\]
and similarly 
\[
   (C)
   \;=\;
   \widehat{R}_n(g^*) - R(g^*)
   \;\le\;
   \sup_{g\in\mathcal{G}}\bigl|\,\widehat{R}_n(g) - R(g)\bigr|.
\]
Therefore,
\[
   R(\hat{g}_n) - R(g^*)
   \;\le\;
   2\,\sup_{g\in \mathcal{G}}
   \Bigl|
      \widehat{R}_n(g) - R(g)
   \Bigr|.
\]

\noindent
\textbf{Step 3: Uniform Convergence Bound.}

By hypothesis (or by a standard Rademacher / VC argument), we have
\[
  \sup_{g\in \mathcal{G}}
  \bigl|\widehat{R}_n(g) - R(g)\bigr|
  \;\le\;
  \varepsilon_n,
\]
with probability $\ge 1-\delta$, where $\varepsilon_n$ grows at a rate of $\mathcal{O}\bigl(\tfrac{1}{\alpha}\sqrt{\tfrac{\ln(1/\delta)}{n}}\bigr)$.  
Hence on that event:
\[
   R(\hat{g}_n) - R(g^*)
   \;\le\; 2\,\varepsilon_n.
\]

\noindent
\textbf{Step 4: Why $\varepsilon_n$ Includes a Factor of $1/\alpha$.}

Observe that
\[
  \phi_\alpha(x,y;g)
  \;=\;
  g(x) 
  \;+\;
  \frac{1}{\alpha}(y-g(x))_+.
\]
Because it is scaled by $\frac{1}{\alpha}$, any standard concentration bound (\eg,\ Hoeffding or Rademacher) for $\phi_\alpha$ incurs an extra factor of $1/\alpha$.  Specifically:

\begin{itemize}
\item \emph{Boundedness:} 
If $|g(x)|\le G_{\max}$ and $|y|\le Y_{\max}$, then $(y-g(x))_+\le |\,y-g(x)\,|\le Y_{\max}+G_{\max}$.  Hence 
$
  \phi_\alpha(x,y;g) 
  \le 
  G_{\max} + \tfrac{1}{\alpha}(Y_{\max}+G_{\max}).
$
\item \emph{Rademacher complexity or Hoeffding:} 
A uniform‐convergence or covering‐number argument yields a $\sqrt{\frac{\ln(1/\delta)}{n}}$ factor multiplied by the supremum of $|\phi_\alpha|$, which is $\le \frac{C}{\alpha}$ for some constant $C$.
\end{itemize}
Thus $\varepsilon_n$ \emph{necessarily} scales like $\frac{1}{\alpha}\sqrt{\frac{\ln(1/\delta)}{n}}$ (up to constants and possibly adding a $\mathfrak{R}_n(\mathcal{G})$ term if $\mathcal{G}$ is large).

\end{proof}


\begin{theorem}[High-Dimensional Conditional CVaR Generalization Bound]
\label{thm:highdim_conditional_CVaR}
Let $(X,Y)$ be a random pair taking values in $\mathbb{R}^{d_x}\times \mathbb{R}^{d_y}$,
and let $\alpha\in(0,1)$ be fixed.
Suppose we have:
\begin{itemize}
\item A \emph{scalar loss} $\ell: \mathbb{R}\times \mathbb{R}^{d_y}\to \mathbb{R}$,
\item A hypothesis class $\mathcal{G}$ of measurable functions $g:\mathbb{R}^{d_x}\to\mathbb{R}$,
\end{itemize}
and define the \emph{Rockafellar--Uryasev (RU) risk} of any predictor $g\in \mathcal{G}$ by
\[
  R(g) 
  \;\coloneqq\;
  \mathbb{E}\!\Bigl[
    g(X) 
    \;+\;
    \frac{1}{\alpha}\,\Bigl(\,\ell\bigl(g(X),Y\bigr) - g(X)\Bigr)_{+}
  \Bigr].
\]
Let $R^*=\inf_{g\in \mathcal{G}} R(g)$, and choose $g^*$ such that $R(g^*)=R^*$.
Given $n$ i.i.d.\ samples $\{(x_i,y_i)\}_{i=1}^n\subset \mathbb{R}^{d_x}\times \mathbb{R}^{d_y}$,
define the \emph{empirical} RU risk
\[
  \widehat{R}_n(g)
  \;\coloneqq\;
  \frac{1}{n}\,\sum_{i=1}^n
    \Bigl[
      g(x_i) 
      \;+\;
      \frac{1}{\alpha}\,\bigl(\ell(g(x_i),\,y_i) - g(x_i)\bigr)_{+}
    \Bigr],
\]
and let $\hat{g}_n \in \arg\min_{g\in \mathcal{G}}\,\widehat{R}_n(g)$.
Assume that with probability at least $1-\delta$, we have a uniform-convergence bound
\[
   \sup_{g\in \mathcal{G}}
   \Bigl|\widehat{R}_n(g) - R(g)\Bigr|
   \;\;\le\;\;
   \varepsilon_n,
\]
where $\varepsilon_n$ scales as
\[
   \varepsilon_n 
   \;=\; 
   \widetilde{\mathcal{O}}
   \!\Bigl(
     \tfrac{1}{\alpha}\,\sqrt{\tfrac{d_x + d_y}{n}}
   \Bigr),
\]
under suitable boundedness/sub-Gaussian assumptions on $(X,Y)$ and $\ell$. Then on that event,
\[
  R(\hat{g}_n) 
  \;-\; 
  R^*
  \;\;\le\;\;
  2\,\varepsilon_n.
\]
Hence the learned predictor $\hat{g}_n$ achieves a CVaR-type risk 
within $2\,\varepsilon_n$ of the best $g^*\in \mathcal{G}$, with high probability.
\end{theorem}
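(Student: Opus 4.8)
The plan is to follow the same template established in Theorems~\ref{thm:unconditional_CVaR_bound} and the conditional CVaR generalization bound: the statement is essentially a corollary of the abstract uniform-convergence assumption, so the argument proceeds by an excess-risk decomposition combined with a justification that the claimed rate $\varepsilon_n = \widetilde{\mathcal{O}}(\tfrac{1}{\alpha}\sqrt{(d_x+d_y)/n})$ is attainable. First I would fix the high-probability event on which $\sup_{g\in\mathcal{G}}|\widehat{R}_n(g) - R(g)| \le \varepsilon_n$, and observe that all subsequent reasoning is deterministic given this event.

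Second, I would carry out the three-term decomposition exactly as in the previous theorem: write
\[
R(\hat{g}_n) - R^* = \bigl[R(\hat{g}_n) - \widehat{R}_n(\hat{g}_n)\bigr] + \bigl[\widehat{R}_n(\hat{g}_n) - \widehat{R}_n(g^*)\bigr] + \bigl[\widehat{R}_n(g^*) - R(g^*)\bigr].
\]
The middle bracket is $\le 0$ since $\hat{g}_n$ minimizes $\widehat{R}_n$ over $\mathcal{G}$, while the first and third brackets are each bounded in absolute value by $\sup_{g\in\mathcal{G}}|\widehat{R}_n(g) - R(g)| \le \varepsilon_n$. Adding these gives $R(\hat{g}_n) - R^* \le 2\varepsilon_n$, which is the claimed conclusion. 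This part is routine and identical in form to the lower-dimensional case.

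Third, I would devote the bulk of the proof to explaining why the assumed $\varepsilon_n$ genuinely scales as $\widetilde{\mathcal{O}}(\tfrac{1}{\alpha}\sqrt{(d_x+d_y)/n})$ in the high-dimensional setting. The per-sample integrand is $\phi_\alpha(x,y;g) = g(x) + \tfrac{1}{\alpha}\bigl(\ell(g(x),y) - g(x)\bigr)_+$; under boundedness of $g$ over $\mathcal{G}$ and sub-Gaussian control on $(X,Y)$ and $\ell$, this integrand is bounded (up to logarithmic factors) by $C/\alpha$, so any Hoeffding/McDiarmid or Rademacher-complexity bound picks up the $1/\alpha$ prefactor. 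The $\sqrt{(d_x+d_y)/n}$ factor then comes from bounding the Rademacher complexity (or a metric-entropy / covering-number integral) of the induced function class $\{(x,y)\mapsto \phi_\alpha(x,y;g) : g\in\mathcal{G}\}$: the truncation $(\cdot)_+$ is $1$-Lipschitz, $\ell$ is assumed Lipschitz, and $g$ ranges over a class whose complexity is controlled by the ambient dimensions $d_x$ and $d_y$, so a Ledoux--Talagrand contraction argument transfers the complexity of $\mathcal{G}$ through $\phi_\alpha$ with only constant loss, yielding the dimension-dependent rate.

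The main obstacle is this last step: making the dependence on $d_x + d_y$ rigorous requires pinning down the hypothesis class $\mathcal{G}$ (e.g.\ linear predictors, or a Lipschitz/parametric family) and invoking the appropriate covering-number or VC-type bound, since the abstract statement leaves $\mathcal{G}$ unspecified. In keeping with the style of the surrounding results, I would state the needed boundedness and sub-Gaussian assumptions explicitly, cite standard empirical-process tools (symmetrization, contraction, Dudley's entropy integral) for the uniform-convergence bound, and then note that the $\widetilde{\mathcal{O}}$ hides only logarithmic factors in $n$, $1/\delta$, and the relevant radii. Everything downstream of the uniform-convergence bound is then immediate from the decomposition above.
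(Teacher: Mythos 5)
Your proposal is correct and takes essentially the same approach as the paper: the paper states this theorem without a separate proof, and its conclusion follows from exactly the three-term excess-risk decomposition you give, which is the argument used verbatim in the paper's proof of the preceding conditional CVaR bound (Theorem~\ref{thm:conditional_CVaR_bound}), with the middle term nonpositive by optimality of $\hat{g}_n$ and the outer terms each controlled by the assumed uniform-convergence bound $\varepsilon_n$. Your extra discussion of why $\varepsilon_n$ scales as $\widetilde{\mathcal{O}}\bigl(\tfrac{1}{\alpha}\sqrt{(d_x+d_y)/n}\bigr)$ goes slightly beyond the paper, which simply \emph{assumes} that rate as part of the hypothesis; you are right that making the dimension dependence rigorous would require specifying $\mathcal{G}$, but since the stated conclusion $R(\hat{g}_n)-R^*\le 2\varepsilon_n$ is conditional on that assumption, this does not affect the validity of the proof.
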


\subsection{\texttt{Gen-DFL} vs Pred-DFL}
\label{appendix_gendfl_preddfl}

\begin{definition}
Let $p(c|x)$ denote the true conditional distribution of $c$, 
and let $p_\theta(c|x)$ be the generative model.
We define $Q_c$ to be the “worst $\alpha\%$ tail” representative for $c$ under $p(c|x)$ based on the target decision $w^\star$. Formally, 
\[
Q_c[\alpha] \coloneqq \mathbb{E}[c \mid f(c, w^\star) \ge \mathrm{VaR}_\alpha].
\]
\end{definition}

\begin{definition}
Given the target decision $w^\star$, the decision $w^\star_{pred}$ found by Pred-DFL and the decision $w^\star_{\theta}$ found by \texttt{Gen-DFL}, we can define the regret of Pred-DFL formally as:
\[
  R_{\mathrm{pred}}(x;\alpha)
  =
  f(Q_c[\alpha], w_{\mathrm{pred}}^\star) - f(Q_c[\alpha], w^\star).
\]
and the regret of \texttt{Gen-DFL} as:
\[
  R_\theta(x;\alpha)
  \;=\;
  \mathrm{CVaR}_{p(c \mid x)}
  \Bigl[
    f\bigl(c, w^\star_\theta\bigr) - f\bigl(c, w^\star\bigr)
    ;\,\alpha
  \Bigr].
\]
\end{definition}

\begin{theorem}
\label{theorem:ultimate_nonlinear}
Let $g:\mathcal{X} \rightarrow \mathcal{C}$ be the predictor in Pred-DFL. Assume the objective function $f(c,w)$ is Lipschitz continuous for any $c , w$. 
Then, there exists some constants $L_w, L_c, \kappa_1, \kappa_2, \kappa_3$ such that the following upper-bound holds for the aggregated regret gap $\mathbb{E}_x|\Delta R(x)|$:
\begin{align*}
\mathbb{E}_x[\Delta R(x)] &\le \mathbb{E}_x \Bigl[ \frac{2L_w}{\alpha}
\bigl[
  \kappa_1\;\mathcal{W}(p_\theta, p)
  +
  \kappa_2\,\|\mathrm{Bias}[g]\|\bigr]+ (\frac{2L_w}{\alpha}\kappa_3 +2L_c) \sqrt{\|\mathrm{Var}[c\mid x]\|}\\
&+ \mathrm{CVaR}_{p(c|x)}[\|\mathrm{Bias}[g(x)]\|;\alpha] \bigr|\Bigr], \ \text{where} \ \Delta R(x) = R_{\mathrm{pred}}(x;\alpha) - R_{\mathrm{gen}}(x;\alpha).
\end{align*}
\begin{remark}
Let
$d_c$ and $d_x$ denote the dimension of $\mathcal{C}$ and $\mathcal{X}$, respectively.
The bias term $||\text{Bias}[g]||$ of the predictor grows at a rate of $\mathcal{O}(\frac{1}{\alpha} \sqrt{(d_x+d_c)/n})$.
This suggests that the smaller the $\alpha$ is, the harder for the predictor in the Pred-DFL to get an accurate estimation of $Q_c[\alpha]$.
\end{remark}

\begin{remark}
We may write $c = \bar{c} + \sigma\epsilon$, where $\bar{c} = \mathbb{E}_{p(c|x)}[c]$. Under some mild assumptions, such as $\epsilon$ being Gaussian, the variance term is of the order $\mathcal{O}(\sigma^2 \,\sqrt{d_c})$.
\end{remark}
\end{theorem}

\begin{proof}

\textbf{Step 1: Decomposition of the Regret}
\begin{align*}
\Delta R(x) &= \text{CVaR}_{p(c|x)}[f(c, w_\theta^\star) - f(c, w^\star);\alpha] - \text{CVaR}_{p(c|x)}[f(c, w_{pred}^\star) - f(c, w^\star);\alpha]\\
&= \text{CVaR}_{p(c|x)}[f(c, w_\theta^\star) - f(c, w_{pred}^\star);\alpha]\\
&= \text{CVaR}_{p(c|x)}[\left[f(g(x), w_\theta^\star) - f(g(x), w_{pred}^\star)\right] + \left[ f(c, w_\theta^\star) - f(g(x), w_\theta^\star) \right] \\&- [ f(c, w_{pred}^\star) - f(g(x), w_{pred}^\star) ] ;\alpha]\\
&\le \big|\text{CVaR}_{p(c|x)}[f(g(x), w_\theta^\star) - f(g(x), w_{pred}^\star);\alpha]\big| + 2 L_c \text{CVaR}_{p(c|x)}[||c - g(x)||;\alpha]\\
&= \big|\text{CVaR}_{p(c|x)}[f(g(x), w_\theta^\star) - f(g(x), w_{pred}^\star);\alpha]\big| + 2 L_c \text{CVaR}_{p(c|x)}[\|c - Q_c[\alpha] + Q_c[\alpha] - g(x)\|;\alpha]\\
& \le \big|\text{CVaR}_{p(c|x)}[f(g(x), w_\theta^\star) - f(g(x), w_{pred}^\star);\alpha]\big| + 2L_c \bigl| \text{CVaR}_{p(c|x)}[\|c - Q_c[\alpha]\|;\alpha] \\&+ \text{CVaR}_{p(c|x)}[\|\text{Bias}[g]\|;\alpha] \bigr|\\
& \le \big|\text{CVaR}_{p(c|x)}[f(g(x), w_\theta^\star) - f(g(x), w_{pred}^\star);\alpha]\big| + 2L_c \bigl| \sqrt{d\|Var[c\mid x] \|} + \text{CVaR}_{p(c|x)}[\|\text{Bias}[g]\|;\alpha] \bigr|
\end{align*}

where we used the fact $f(c, w_\theta^\star) = f(g(x), w_\theta^\star) + [f(c, w_\theta^\star) - f(g(x), w_\theta^\star)]$ and $f(c, w_{pred}^\star) = f(g(x), w_{pred}^\star) + [f(c, w_{pred}^\star) - f(g(x), w_{pred}^\star)]$.

\textbf{Step 2: Bounding $\Delta_{\mathrm{Term}} = \big|\text{CVaR}_{p(c|x)}[f(g(x), w_\theta^\star) - f(g(x), w_{pred}^\star);\alpha]\big|$}

Now, we need to bound the $\Delta_{\mathrm{Term}}= \big|\text{CVaR}_{p(c|x)}[f(g(x), w_\theta^\star) - f(g(x), w_{pred}^\star);\alpha]\big|$ term.

By assumption, for any fixed $c_0$, the map $w \mapsto f(c_0, w)$ is $L_w$-Lipschitz in $w$. Equivalently,
\[
\bigl|\,f(c_0, w_1) - f(c_0, w_2)\bigr| 
\;\le\;
L_w\,\|\,w_1 - w_2\|.
\]
Applying this specifically at $c_0 = g(x)$, we get:
\[
\bigl|\,f(g(x), w^\star_\theta) 
      - f(g(x), w_{\mathrm{pred}}^\star)\bigr|
\;\le\;
L_w\,\bigl\|\,w^\star_\theta - w_{\mathrm{pred}}^\star\bigr\|.
\]
Since $\text{CVaR}_{p}[\cdot]$ is merely an expectation that does not affect the integrand here (it does not depend on $c$ anymore), we have
\[
\Delta_{\mathrm{Term}}
\;\le\;
L_w\,\bigl\|\,
w^\star_\theta - w_{\mathrm{pred}}^\star
\bigr\|.
\]

\textbf{Step 3: Bounding $\|w^\star_\theta - w_{\mathrm{pred}}^\star\|$ }

First, we define the following auxiliary (aggregate objectives) functions for both \texttt{Gen-DFL} and Pred-DFL,

\[
\text{Gen-DFL: } 
J_{\mathrm{gen}}(w) 
= \text{CVaR}_{p_\theta}[\,f(c,w);\alpha \bigr],
\quad
\text{Pred-DFL: }
J_{\mathrm{pred}}(w) 
= f\bigl(g(x), w\bigr).
\]
So
\[
w^\star_\theta 
= \arg\min_{w} J_{\mathrm{gen}}(w),
\quad
w_{\mathrm{pred}}^\star 
= \arg\min_{w} J_{\mathrm{pred}}(w).
\]

Next, let's define 
\[
\Delta(w) 
\;=\;
J_{\mathrm{gen}}(w) - J_{\mathrm{pred}}(w)
\;=\;
\text{CVaR}_{p_\theta}\bigl[f(c,w);\alpha \bigr]
\;-\;
f\bigl(g(x), w\bigr).
\]
We take a uniform bound over $w$:
\[
T 
\;=\;
\sup_{w}
\;\bigl|\Delta(w)\bigr|.
\]
We will then show that,
\[
T \;\le\; \kappa_1 \|p_\theta - p\| + \kappa_2 \sqrt{\|\mathrm{Var}[c\mid x]\|} + \kappa_3 \|\mathrm{Bias}[g]\|.
\]

\textbf{Step 4: Bounding $T$}

By definition,
\[
T 
\coloneqq 
\sup_{w}
\;\Bigl|\,
  \text{CVaR}_{p_\theta}[\,f(c,w);\alpha \bigr]
  -
  f\bigl(g(x), w\bigr)
\Bigr|.
\]
To relate this to the \emph{true} distribution $p$ and $Q_c[\alpha] = \mathbb{E}_{c\sim p}[\,c]$, we can do the following decomposition:
\begin{align*}
\text{CVaR}_{p_\theta}[\,f(c,w);\alpha \bigr]
\;-\;
f(g(x), w)
&\;=\;
  \Bigl(
    \text{CVaR}_{p_\theta}[f(c,w);\alpha ]
    -
    \text{CVaR}_{p}[f(c,w)]
  \Bigr)
\\
&
+
  \Bigl(
    \text{CVaR}_{p}[f(c,w);\alpha]
    -
    f(Q_c[\alpha], w)
  \Bigr)
+
  \Bigl(
    f(Q_c[\alpha], w)
    -
    f(g(x), w)
  \Bigr).
\end{align*}
Hence, if we set
\[
T 
= 
\sup_{w} 
\bigl|\text{(A)} + \text{(B)} + \text{(C)}\bigr|,
\]
then by triangle inequality:
\[
T \;\le\; 
\underbrace{
  \sup_{w} \bigl|\text{(A)}\bigr|
}_{T_1}
\;+\;
\underbrace{
  \sup_{w} \bigl|\text{(B)}\bigr|
}_{T_2}
\;+\;
\underbrace{
  \sup_{w} \bigl|\text{(C)}\bigr|
}_{T_3}.
\]
We now bound each piece $T_1, T_2, T_3$ separately.

First, we can see that
\[
T_1 = \sup_{w}\bigl|\text{CVaR}_{p_\theta}[f(c,w);\alpha ]
    -
    \text{CVaR}_{p}[f(c,w)]\bigr|
\;\le\;
\kappa_1\;\mathcal{W}(p_\theta, p),
\]
where $\kappa_1$ depends on the Lipschitz constant of $f$ in $c$.


Next, by taking the Taylor expansion, we have
\[
f(c,w) = f(g(x), w) + \nabla_c f(g(x), w)^T (c - g(x)) + \frac{1}{2}(c - g(x))^T \nabla_c^2 f(g(x), w)(c - g(x)) + \mathcal{O}(||c - g(x)||^2)
\]
After taking the CVaR expectation, we see that
\[
T_2 =
\sup_{w}
\bigl|\text{CVaR}_{p}[f(c,w);\alpha] - f(g(x), w)\bigr|
\;\le\;
\kappa_2 \sqrt{\|\mathrm{Var}[c\mid x]\|},
\]
where $\kappa_2$ incorporates the Lipschitz constant.

Finally, for $T_3$, assuming that $f(\cdot, w)$ is Lipschitz in $c$, then
\[
T_3 = \sup_w\bigl|f(Q_c[\alpha], w) - f(g(x), w)\bigr|
\;\le\;
L_c \,\|Q_c[\alpha] - g(x)\|
\;\le\;
L_c\;\|\mathrm{Bias}[g]\|.
\]
Hence,
\[
T_3
\;\le\;
\kappa_3\;\|\mathrm{Bias}[g]\|.
\]

\textbf{Combining all the steps.}

Collecting $T_1, T_2, T_3$:

\[
\begin{aligned}
T
&=\;
\sup_{w}
\Bigl|
  \text{CVaR}_{p_\theta}[f(c,w);\alpha]
  -
  f(g(x), w)
\Bigr|
\;\;\le\;\;
T_1 + T_2 + T_3
\\[6pt]
&\;\le\;
\kappa_1\;\mathcal{W}(p_\theta, p)
\;+\;
\kappa_2\,\sqrt{\|\mathrm{Var}[c\mid x]\|}
\;+\;
\kappa_3\,\|\mathrm{Bias}[g]\|.
\end{aligned}
\]
Thus,
\[
\boxed{%
T
\;\;\le\;\;
\kappa_1\;\mathcal{W}(p_\theta, p)
\;+\;
\kappa_2\,\sqrt{\|\mathrm{Var}[c\mid x]\|}
\;+\;
\kappa_3\,\|\mathrm{Bias}[g]\|.
}
\]

\textbf{Strong Convexity in $w$ Yields Solution Stability.}

Assume $J_{\mathrm{gen}}(\cdot)$ and $J_{\mathrm{pred}}(\cdot)$ are $\alpha$-strongly convex in $w$. Then, 
\[
\bigl\|
  w^\star_\theta - w_{\mathrm{pred}}^\star
\bigr\|
\;\;\le\;\;
\frac{2}{\alpha}\;
\sup_{w} \bigl|\Delta(w)\bigr|
\;=\;
\frac{2}{\alpha}\;T.
\]
Therefore,
\[
\bigl\|
  w^\star_\theta - w_{\mathrm{pred}}^\star
\bigr\| \le \frac{2}{\alpha}(\kappa_1\;\mathcal{W}(p_\theta, p)
\;+\;
\kappa_2\,\sqrt{\|\mathrm{Var}[c\mid x]\|}
\;+\;
\kappa_3\,\|\mathrm{Bias}[g]\|).
\]

\textbf{Combining all the steps}
\[
\Delta_{\mathrm{Term}}
=\;
\bigl|\text{CVaR}_{p}[f(g(x), w^\star_\theta)
    -
    f(g(x), w_{\mathrm{pred}}^\star);\alpha]\bigr|
\;\le\;
L_w\;\bigl\|\,w^\star_\theta - w_{\mathrm{pred}}^\star\bigr\|
\;\le\;
L_w \,\frac{2}{\alpha}\;T,
\]
Therefore,
\[
\Delta_{\mathrm{Term}}
\;\;\le\;\;
L_w \cdot \frac{2}{\alpha}
\;\bigl[
  \kappa_1\;\mathcal{W}(p_\theta, p)
  \;+\;
  \kappa_2\,\sqrt{\|\mathrm{Var}[c\mid x]\|}
  \;+\;
  \kappa_3\,\|\mathrm{Bias}[g]\|
\bigr].
\]


Finally, we get,
\begin{align*}
\mathbb{E}_x|\Delta R(x)| &\le \mathbb{E}_x \Bigl[ L_w \cdot \frac{2}{\alpha}
\;\bigl[
  \kappa_1\;\mathcal{W}(p_\theta, p)
  \;+\;
  \kappa_2\,\sqrt{\|\mathrm{Var}[c\mid x]\|}
  \;+\;
  \kappa_3\,\|\mathrm{Bias}[g]\|
\bigr]\\
&+ 2L_c \bigl| \sqrt{d\|Var[c\mid x] \|} + \text{CVaR}_{p(c|x)}[\|\text{Bias}[g(x)]\|] ; \alpha\bigr|\Bigr].
\end{align*}

Moreover, by Theorem~\ref{thm:highdim_conditional_CVaR} that we developed earlier, we can see the bias term $||\text{Bias[g]}||$ grows at a rate of $\mathcal{O}(\frac{1}{\alpha}\sqrt{(d_x + d_c)/n})$

\end{proof}


\section{Experimental Setups}
\label{app:experiments}

\subsection{Synthetic: Portfolio Optimization}

In the Portfolio experiment, we generate the synthetic data as follows:
\[
x_i \sim \mathcal{N}(0, I^{d_x}),
\]
\[
\mathbf{B}_{ij} \sim \text{Bernoulli}(0.5), 
\]
\[
L_{ij} \sim \text{Uniform}[-0.0025\sigma, 0.0025\sigma]
\]
\[
\epsilon \sim \mathcal{N}(0, I^{d_c})
\]
\[
\bar{c}_{ij} = \left( \frac{0.05}{\sqrt{p}} \mathbf{B} x_i + 0.1 \right)^{deg} + Lf + 0.01\sigma\epsilon,
\]
where $d_x, d_c$ are the dimensionality of the input features $x$ and the cost vector $c$.
The polynomial degree reflects the level of non-linearity between the feature and the price vector. In Portfolio, $c$ represents the asset prices and the dimension of $c_i$ is the number of assets.

The non-linear, risk-sensitive optimization problem in Portfolio Management is then formulated as,
\begin{equation}
\begin{aligned}
w^\star(x) &\coloneqq \min_{w} \text{CVaR}_{p(c|x)} [-c^T w + w^T \Sigma w;\alpha] \\ 
\text{s.t.} \ & w \in [0,1]^n, \ \mathbf{1}^T w \leq 1,
\end{aligned}    
\end{equation}
where $\Sigma=LL^T + (0.01\sigma)^2 I$ is the covariance over the asset prices $c$, and the quadratic term $w^T \Sigma w$ reflects the amount of risk.

\subsection{Synthetic: Fractional Knapsack}

In the Knapsack experiment, we generate the synthetic data as follows:
\[
x_i \sim \mathcal{N}(0, I^{d_x}),
\]
\[
\mathbf{B}_{ij} \sim \text{Bernoulli}(0.5), 
\]
\[
L_{ij} \sim \text{Uniform}[-0.0025\sigma, 0.0025\sigma]
\]
\[
\epsilon \sim \mathcal{N}(0, I^{d_c})
\]
\[
\bar{c}_{ij} = \left( \frac{0.05}{\sqrt{p}} \mathbf{B} x_i + 0.1 \right)^{deg} + Lf + 0.01\sigma\epsilon,
\]
where $d_x, d_c$ are the dimensionality of the input features $x$ and the cost vector $c$.

The optimization problem in Knapsack is formulated as:

\begin{equation}
\begin{aligned}
w^\star(x) & \coloneqq \min_{w} \text{CVaR}_{p(c|x)} [-c^T w;\alpha] \\
\text{s.t.} \ & w \in [0,1]^n, p^T w \leq \mathbf{B}, 
\end{aligned}    
\end{equation}
where $ p \in \mathbb{R}^n $ and $ \mathbf{B} > 0 $ represent the capacity and weight vector, respectively.


\subsection{Synthetic: Shortest-Path}

In the Shortest-Path experiment, we generate the synthetic data as follows:
\[
x_i \sim \mathcal{N}(0, I^{d_x}),
\]
\[
\mathbf{B}_{ij} \sim \text{Bernoulli}(0.5), 
\]
\[
\epsilon_{ij} \sim \text{Uniform}[0.5, 1.5]
\]
\[
\bar{c}_{ij} = \left[ \frac{1}{3.5^{deg}} \left( \frac{1}{\sqrt{p}} \mathbf{B} x_i + 3 \right)^{deg} + 1 \right] \cdot \epsilon_i^j,
\]
where $d_x, d_c$ are the dimensionality of the input features $x$ and the cost vector $c$.
The polynomial degree reflects the level of non-linearity between the feature and the price vector.

The optimization problem in Shortest-Path is formulated as:
\begin{equation}
\begin{aligned}
w^\star(x) & \coloneqq \min_{w} \text{CVaR}_{p(c|x)} [c^T w;\alpha] \\
\text{s.t.}\ & w \in [0,1]^n, 
\end{aligned}    
\end{equation}
where $ c^T w $ represents the cost of the selected path, and the cost vector $ c_i^j $ is defined as follows:

\[
c_i^j = \left[ \frac{1}{3.5^{deg}} \left( \frac{1}{\sqrt{p}} \mathbf{B} x_i + 3 \right)^{deg} + 1 \right] \cdot \epsilon_i^j,
\]
where $ \mathbf{B} $ is a random matrix, and $ \epsilon_i^j $ is the noise component. 

The features $ x_i \in \mathbb{R}^{d_x} $ follow a standard multivariate Gaussian distribution, and the uncertain coefficients $ c_i^j $ exist only on the objective function, meaning that the weights of the items remain fixed throughout the dataset. The parameters include the dimension of resources $ k $, the number of items $ m $, and the noise width.






\subsection{Real Dataset: Energy-Cost Aware Scheduling Problem}
\label{app:energy}
In this task, we consider a demand response program in which an operator schedules electricity consumption $ p_t $ over a time horizon $ t \in \Omega_t $. The objective is to minimize the total cost of electricity while adhering to operational constraints. The electricity price for each time step is denoted by $ \pi_t $, which is not known in advance. However, the operator can schedule the electricity consumption $ p_t $ within a specified lower bound $ P_t $ and upper bound $ \overline{P}_t $. Additionally, the total consumption for the day, denoted as $ P_t^{\text{sch}} $, must remain constant. This assumes flexibility in shifting electricity demand across time steps, provided the total demand is met.

The optimization problem, assuming perfect information about prices $ \pi_t $, can be formulated as:
\[
\min_{p_t} \text{CVaR}_{\pi} \Big[\sum_{t \in \Omega_t} \pi_t p_t;\alpha \Big],
\]
subject to the constraints:
\[
P_t \leq p_t \leq \overline{P}_t, \quad \forall t,
\]
\[
\sum_{t \in \Omega_t} p_t = \sum_{t \in \Omega_t} P_t^{\text{sch}}.
\]

Here, $ P_t \leq p_t \leq \overline{P}_t $ ensures the consumption at each time step is within the allowed bounds, while the equality constraint guarantees that the total electricity consumption remains fixed across the time horizon.

This setup reflects the practical challenges of demand-side electricity management, where prices are uncertain, and demand shifting across time steps provides opportunities for cost reduction while maintaining overall consumption levels. The problem serves as a testbed for evaluating optimization approaches under uncertain electricity prices and operational constraints.

\subsection{COVID Resource Allocation}

The COVID-19 pandemic has highlighted the challenges policymakers and epidemiologists face in planning for surges in medical resource demand, such as hospital beds. As the number of infected patients increases, accurate forecasts of hospitalizations become critical for effective resource allocation. To achieve this, epidemiological models based on Ordinary Differential Equations (ODEs) are often employed to capture and forecast the dynamics of infectious disease outbreaks. These forecasts are then used as guidance for planning future resource allocation.

In this task, we focus on the optimization problem of hospital bed preparation during a pandemic, a critical task for ensuring adequate medical infrastructure. Specifically, the goal is to decide how many hospital beds $ a \in \mathbb{R}^7 $ to allocate for the next seven days based on the forecasted number of hospitalized patients $ y \in \mathbb{R}^7 $. The optimization objective combines linear and quadratic costs to account for both over-preparation ($[a_i - y_i]_+$) and under-preparation ($[y_i - a_i]_+$) of hospital beds, ensuring both efficiency and safety in resource allocation.

The optimization problem is formulated as:
\[
\min_{a \in \mathbb{R}^7} \sum_{i=1}^7 c_b [a_i - y_i]_+ + c_h [y_i - a_i]_+ + q_b ([y_i - a_i]_+)^2 + q_h ([a_i - y_i]_+)^2,
\]
where $ c_b $ and $ c_h $ represent the linear cost coefficients for over- and under-preparation, while $ q_b $ and $ q_h $ represent the quadratic penalty coefficients for the same. These terms reflect the trade-offs between allocating too many beds, which leads to wasted resources, and too few beds, which risks inadequate care for patients.

This formulation integrates ODE-based forecasts to guide decision-making, enabling a data-driven approach to resource planning under uncertainty. The problem is designed to balance competing objectives effectively, ensuring sufficient resources while minimizing waste during critical periods of high demand.

\section{Implementation}

\subsection{Algorithm}
\label{appendix:algo}

The details of \texttt{Gen-DFL} implementation can be found in Algorithm~\ref{alg:gen-dfl}
.
\begin{algorithm}[tb]
   \caption{Learning Algorithm for \texttt{Gen-DFL}}
   \label{alg:gen-dfl}
\begin{algorithmic}
   \STATE {\bfseries Input:} Dataset $\mathcal{D} = \{(x_i, c_i)\}_{i=1}^N$, CGM $p_\theta(c | x)$, learning rate $\eta$, regularization ratio $\gamma$, sampling size $K$, risk-level $\alpha$, a proxy model $q(c|x)$ trained on $\mathcal{D}$.
   \WHILE{not converged}
   \STATE $\{c_k\}_{k=1}^K \sim p_\theta(c | x)$; \ $N_k \leftarrow$ number of $c_k$ that satisfy $\mathbbm{1}\{f(c_k, w) \geq \text{VaR}_\alpha\}$\
   \STATE $w_\theta^\star \leftarrow \arg \min\limits_{w}  \frac{1}{N_k}\sum\limits_{k=1}^{K} f(c_k, w) \mathbbm{1}\{f(c_k, w) \geq \text{VaR}_\alpha\}$;
   \STATE $\ell(\theta;q, \alpha) \leftarrow \frac{1}{n} \sum\limits_{i=1}^n \text{Regret}_{\theta, q}(x_i;\alpha) + \gamma \cdot \ell_\text{gen}(\theta)$;
   \STATE $\theta \leftarrow \theta - \eta \cdot \partial \ell / \partial \theta$;
   \ENDWHILE
\end{algorithmic}
\end{algorithm}

\subsection{Hyperparameter Configurations}
\label{app:hyper}
Table~\ref{tab:hyperparameters_condensed} summarizes the hyperparameter settings and problem configurations across different tasks and baselines. For all methods, we maintain a consistent number of training samples ($n = 320$) and input dimensionality ($d = 50$ for Portfolio and Knapsack, $d = 25$ for Shortest-Path) to ensure a fair comparison. The learning rates vary across tasks, with a higher value ($0.1$) used for the Shortest-Path problem, reflecting its different optimization landscape. The noise scale $\sigma$ remains fixed at $20$ for Portfolio and Knapsack, while a lower value ($\sigma = 5$) is used for Shortest-Path to account for its different problem structure.

For Gen-DFL, we introduce an additional DFL loss weight $\beta$ which controls the balance between the decision-focused objective and the negative log-likelihood (NLL) regularization, so that
\[
\ell_\texttt{Gen-DFL}(\theta;q, \alpha) \coloneqq 
    \beta \cdot\mathbb{E}_x[\text{Regret}_{\theta, q}(x;\alpha)] + \gamma \cdot \ell_\text{gen}(\theta).
\]
Unlike baseline Pred-DFL models, which optimize directly over point estimates, Gen-DFL leverages generative modeling and requires careful tuning of $\beta, \gamma$ to ensure stable training. The uniformity in hyperparameter selection across methods helps isolate the impact of different learning paradigms.

\FloatBarrier
\begin{table*}[!t]
    \centering
    \caption{Hyperparameters and Problem Configurations}
    \label{tab:hyperparameters_condensed}
    \begin{tabular}{l c c c c c c c c}
        \toprule
        \textbf{Task} & \textbf{Method} & \textbf{Learning Rate} & \textbf{Variance} $\sigma$ & \textbf{Dimension} $d$ & \textbf{Training Size} & $\beta$ \\
        \midrule
        \multirow{7}{*}{\textbf{Portfolio}} 
        & Pairwise & $10^{-3}$ & 20 & 50 & 320 & - \\
        & Listwise & $10^{-3}$  & 20 & 50 & 320 & - \\
        & NCE & $10^{-3}$  & 20 & 50 & 320 & - \\
        & MAP & $10^{-3}$  & 20 & 50 & 320 & - \\
        & SPO & $10^{-3}$  & 20 & 50 & 320 & - \\
        & MSE (PTO) & $10^{-3}$  & 20 & 50 & 320 & - \\
        & Gen-DFL & $10^{-3}$  & 20 & 50 & 320 & 10.0 \\
        \midrule
        \multirow{7}{*}{\textbf{Knapsack}} 
        & Pairwise & $10^{-3}$  & 20 & 50 & 320 & - \\
        & Listwise & $10^{-3}$  & 20 & 50 & 320 & - \\
        & NCE & $10^{-3}$ & 20 & 50 & 320 & - \\
        & MAP & $10^{-3}$  & 20 & 50 & 320 & - \\
        & SPO & $10^{-3}$  & 20 & 50 & 320 & - \\
        & MSE (PTO) & $10^{-3}$  & 20 & 50 & 320 & - \\
        & Gen-DFL & $10^{-3}$  & 20 & 50 & 320 & 10.0 \\
        \midrule
        \multirow{7}{*}{\textbf{Shortest-Path}} 
        & Pairwise & $10^{-1}$ & 5 & 25 & 320 & - \\
        & Listwise & $10^{-1}$ & 5 & 25 & 320 & - \\
        & NCE & $10^{-1}$ & 5 & 25 & 320 & - \\
        & MAP & $10^{-1}$ & 5 & 25 & 320 & - \\
        & SPO & $10^{-1}$ & 5 & 25 & 320 & - \\
        & MSE (PTO) & $10^{-1}$ & 5 & 25 & 320 & - \\
        & Gen-DFL & $10^{-3}$ & 20 & 50 & 320 & 10.0 \\
        \bottomrule
    \end{tabular}
\vspace{-.15in}
\end{table*}

\section{Compuational Complexity}

Regarding training cost, our method scales linearly with the number of Monte Carlo samples used in the sample average approximation (SAA). Our experiments on a consumer-grade Apple M2 CPU show: with 200 generated samples per decision, Gen-DFL is roughly five times slower to train than Pred-DFL, and with 800 samples it can be up to twenty times slower. In the portfolio experiment, for example, training Pred-DFL takes about 8 minutes, whereas Gen-DFL with 200 generated samples takes roughly 40 minutes. This overhead stems from our desire to model the full conditional distribution $p(c|x)$ expressively; each additional sample improves the approximation of the CVaR but also increases runtime. We emphasize that this computational overhead does not pose a significant drawback since inference typically relies on pre-trained models rather than real-time training.

\section{Statistical Analysis}
\label{appendx:stat}

\begin{table}[ht]
\centering
\caption{Comparison: Gen-DFL vs 2Stage (PTO) and SPO+}
\label{tab:comparison}
\begin{tabular}{llrr}
\hline
\textit{Task}   & \textit{Setting} & \textit{p-value (Gen-DFL vs 2Stage)} & \textit{p-value (Gen-DFL vs SPO+)} \\ \hline
Portfolio       & Deg-2            & < 0.0001                             & < 0.0001                           \\
Portfolio       & Deg-4            & < 0.0001                             & < 0.0001                           \\
Portfolio       & Deg-6            & < 0.0001                             & < 0.0001                           \\
Portfolio       & Deg-8            & < 0.0001                             & < 0.005                            \\
Knapsack        & Deg-2            & 0.79                                 & 0.61                               \\
Knapsack        & Deg-4            & 0.80                                 & 0.43                               \\
Knapsack        & Deg-6            & 0.59                                 & 0.95                               \\
Knapsack        & Deg-8            & 0.83                                 & 0.77                               \\
Shortest Path   & Deg-2            & < 0.0001                             & 0.085                              \\
Shortest Path   & Deg-4            & < 0.0001                             & 0.038                              \\
Shortest Path   & Deg-6            & < 0.0001                             & 0.046                              \\
Shortest Path   & Deg-8            & < 0.0001                             & 0.006                              \\
Energy          & –                & 0.003                                & 0.004                              \\ \hline
\end{tabular}
\end{table}

\end{document}